\newtheorem{Step}{Step}
\newtheorem{theorem}{Theorem}
\newtheorem{lemma}{Lemma}
\newtheorem{definition}{Definition}
\begin{document}

\title{A Novel Model of Working Set Selection\\
       for SMO Decomposition Methods } 
\author{\authorblockN{Zhen-Dong Zhao$^{1}$} 
\and
\authorblockN{Lei Yuan$^{2}$} 
\and 
\authorblockN{Yu-Xuan Wang$^{2}$}
\and
\authorblockN{Forrest Sheng Bao$^{2}$}
\and
\authorblockN{Shun-Yi Zhang$^{1}$}
\and
\authorblockN{Yan-Fei Sun$^{1}$}
\\
\authorblockA{1.Institution of Information \& Network
      Technology, \\ Nanjing University of Posts and
      Telecommunications, Nanjing, 210003, CHINA}\\
\authorblockA{2.School of Communications and Information
      Engineering, \\ Nanjing University of Posts and
      Telecommunications, Nanjing, 210003, CHINA} \\
\authorblockA{\{zhaozhendong, lyuan0388, logpie, forrest.bao\}@gmail.com}
\and
\authorblockA{\{dirzsy, sunyanfei\}@njupt.edu.cn}}

\maketitle
\begin{abstract}
In the process of training Support Vector Machines~(SVMs) by
decomposition methods, working set selection is an important
technique, and some exciting schemes were employed into this field. To
improve working set selection, we propose a new model for working set
selection in sequential minimal optimization~(SMO) decomposition
methods. In this model, it selects $B$ as 
working set without reselection. Some properties are given by simple
proof, and experiments demonstrate that the proposed method is in
general faster than existing methods.
\end{abstract}

\begin{keywords}
support vector machines, decomposition methods, sequential minimal
optimization, working set selection
\end{keywords}

\IEEEpeerreviewmaketitle
\thispagestyle{empty}

\section{Introduction}
In the past few years, there has been huge of interest
in Support Vector Machines~(SVMs)~\cite{Boser.1992, Cortes.1995} because
they have excellent generalization performance on a wide range of
problems. The key work in training SVMs is to
solve the follow quadratic optimization problem.

\begin{align}
\min_{\alpha B} & f(\alpha) = \frac{1}{2} \alpha ^{T} Q \alpha - e^{T}
\alpha \notag \\ 
\quad\text{subject to }\quad & 0 \leq \alpha_{i} \leq
C, i=1,\dots,l \\ 
& y^{T} \alpha = 0 \notag
\end{align}
where $e$ is the vector of all ones, $C$ is the upper bound of all
variables, and $ Q_{ij}=y_{i} y_{j} K(x_{i},x_{j})$, $K(x_{i},x_{j})$ is the kernel function.

Notable effects have been taken into training SVMs~\cite{Chang.2001,
  Joachims.1998, Platt.1998, Osuna.1997}. Unlike most optimization
methods which update the whole vector $\alpha$ in each iteration, the
decomposition method modifies only a subset of $\alpha$ per iteration.
In each iteration, the variable indices are split into a "working
set": $B \subseteq \{1,\dots,l\}$ and its complement $N=\{1,\dots,l\}
\setminus B$. Then, the sub-problem with variables $x_{i},i \in B$, is
solved, thereby, leaving the values of the remaining variables
$x_{j},j \in N$ unchanged. This method leads to a small sub-problem to
be minimized in each iteration. An extreme case is the Sequential
Minimal Optimization~(SMO)~\cite{Platt.1998, PlattUS.1999}, which
restricts working set to have only two elements. Comparative tests
against other algorithms, done by Platt~\cite{Platt.1998}, indicates
that SMO is often much faster and has better scaling properties.
 
Since only few components are updated per iteration, for difficult
problems, the decomposition method suffers from slow convergence.
Better method of working set selection can reduce the number of
iterations and hence is an important research issue. Some methods
were proposed to solve this problem and to reduce the time of training
SVMs~\cite{Fan.2005}. In this paper, 
we propose a new model to select the working set. In this model,
specially, it selects $B$ without reselection.
In another word, once $\{\alpha_{i},\alpha_{j}\} \subset B$ are
selected, they will not be tested or selected during the following
working set selection. Experiments demonstrate that the new model is
in general faster than existing methods.

This paper is organized as following. In section II, we give literature
review, SMO decomposition method and existing working set
selection are both discussed. A new method of working set selection is then
presented in section III. In section IV, experiments with corresponding
analysis are given. Finally, section V concludes this paper.

\section{Literature Review}
In this section we discuss SMO and existing working set selections.

\subsection{Sequential Minimal Optimization}
Sequential Minimal Optimization~(SMO) was proposed by
Platt~\cite{Platt.1998}, which is an extreme case of the decomposition
algorithm where the size of working set is restricted to two. This
method is named as \textit{Algorithm 1}. Keerthi improved the
performance of this algorithm for training SVMs~(classifications and
regressions~\cite{Keerthi.2001, PlattUS.1999}). To take into account
the situation that the Kernel matrices are Non-Positive Definite,
Pai-Hsuen Chen \textit{et al.} ~\cite{Chen.2006, Fan.2005} introduce
the \textit{Algorithm 2} by restrict proof.

\textbf{Algorithm 2}
\begin{Step}
Find $\alpha^{1}$ as the initial feasible solution. Set k=1.
\end{Step}

\begin{Step}
If $\alpha^{k}$ is a stationary point of~(1), stop. Otherwise, find a
working set $B \equiv \{i,j\}$
\end{Step}

\begin{Step}
Let $a_{ij}=K_{ii}+K_{jj}-K_{ij}$.If $a_{ij}>0$, solve the
sub-problem. Otherwise, solve:

\begin{align}
\min_{B} \quad\text{ } & \frac{1}{2} \begin{bmatrix} \alpha_{i} &
  \alpha_{j} \end{bmatrix} \begin{bmatrix} Q_{ii} &Q_{ij} \\ Q_{ji} &
  Q_{jj}\end{bmatrix} \begin{bmatrix} \alpha_{i} \\
  \alpha_{j} \end{bmatrix} \notag \\ & + (-e_{B} +Q_{BN}
\alpha_{N}^{k})^{T} \begin{bmatrix} \alpha_{i} \\
  \alpha_{j} \end{bmatrix} \notag \\ & + \frac{\tau-\alpha_{ij}}{4}
((\alpha_{i} -\alpha_{i}^{k})^{2} + (\alpha_{j} - \alpha_{j}^{k})^{2})
\\ \quad\text{subject to }\quad & 0 \leq \alpha_{i} \leq C \notag \\ &
y_{i} \alpha_{i} +y_{j} \alpha_{j}=-y_{N}^{T} \alpha_{N}^{k} \notag
\end{align}
where $\tau$ is a small positive number.
\end{Step}

\begin{Step}
Set $\alpha^{+1}_{N} \equiv \alpha^{k}_{N}$. Set $k \leftarrow k+1$ and goto step 2.
\end{Step}

Working set selection is a very important procedure during training
SVMs. We discuss some existing methods in next subsection.

\subsection{Existing Working Set Selections}
Currently a popular way to select the working set $B$ is via the
"maximal violating pair", we call it \textit{WSS 1} for short. This
working set selection was first proposed in~\cite{Osuna.1997}, and is
used in, for example, the software LIBSVM~\cite{Chang.2001}. Instead
of using first order approximation which \textit{WSS 1} has used, a
new method which consider the more accurate second order information
be proposed~\cite{Fan.2005}, we call it \textit{WSS 2}. By using the
same $i$ as in \textit{WSS 1}, \textit{WSS 2} check only $O(l)$
possible $B$'s to decide j. Experiments indicate that a full check
does not reduce iterations of using \textit{WSS 2}
much~\cite{Chen.2006, Fan.2005}.

But,for the linear kernel, sometimes $K$ is only positive semi-definite, so
it is possible that $K_{ii}+K_{jj}-2K_{ij} =0$. Moreover, some existing
kernel functions~(e.g., sigmoid kernel) are not the inner product of
two vectors, so $K$ is even not positive semi-definite. Then
$K_{ii}+K_{jj}-2K_{ij} <0 $ may occur .

\setcounter{Step}{0}

For this reason, Chen \textit{et al.}~\cite{Fan.2005}, propose a new
working set selection named \textit{WSS 3}.

\textbf{WSS 3}
\begin{Step}
Select

\[
\overline{a_{ts}}  \equiv 
\begin{cases} a_{ts} & \quad\text{ if }\quad a_{ts} >0 \\
\tau & \quad\text{otherwise }\quad
\end{cases}
\]
\begin{align}
i  \in arg \max_{t} \{ & -y_{t} \triangledown f(\alpha^{k})_{t} | t \in I_{up}(\alpha^{k}) \},
\end{align}
\end{Step}

\begin{Step}
\begin{align}
\intertext{Consider Sub(B) defined and select}
j  \in arg \min_{t} \{ & -\frac{b_{it}^{2}}{\overline{a_{it}}} | t \in
I_{low} (\alpha^{k}),\notag \\
& -y_{t} \triangledown f(\alpha^{k})_{t} <  -y_{i} \triangledown f(\alpha^{k})_{i} \}
\end{align}
\end{Step}

\begin{Step}
Return $B=\{i,j\}$

where
 
\begin{align}
a_{ij} & \equiv K_{ii}+K_{jj}-2K_{ij} >0 \notag \\
b_{ij} & \equiv -y_{i} \triangledown f(\alpha^{k})_{i} + y_{j}
\triangledown f(\alpha^{k})_{j} >0 \notag
\end{align}
\end{Step}

\section{Our New Method}
\subsection{Interesting Phenomena}
Interestingly, when we test the datasets by
LIBSVM~\cite{Chang.2001} which employs \textit{Algorithm 2} and \textit{WSS 3}, some
phenomena attract us. Fig.~\ref{a1a_reused} illustrates two of them.

\begin{figure}[!htp]
\begin{center}
\includegraphics[scale=0.3]{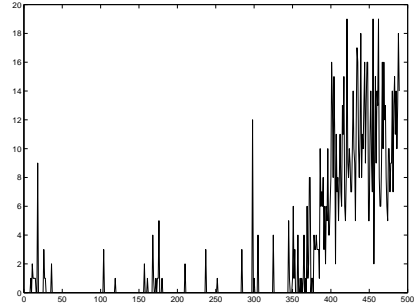}
\caption[a1a_reused]{The illustration of frequency of $\alpha$ reselection.
  Using dataset A1A, and kernel method RBF. Abscissa indicates the
  indices of $\alpha$, and ordinate indicates the number of times a
  certain $\alpha$ is picked in the training process}%
\label{a1a_reused}
\end{center}
\end{figure}

The first phenomenon is, lots of $\alpha$ have not been selected at
all in the training process. Because of the using of
"Shrinking"~\cite{Joachims.1998}, some samples will be "shrunk"
during the training procedure, thus, they will never be selected and
optimized. At the same time, we notice
another interesting phenomenon that several $\alpha$ are selected to
optimize the problem again and again, while others remain untouched.
But does this kind of reselection necessary?



\subsection{Working Set Selection Without Reselection(\textit{WSS-WR})}
After investigating the phenomena above, if we limit the
reselection of $\alpha$, we could effectively reduce the time for
training SVMs, with an acceptable effect on the ability of
generalization. Thus, we propose our new method of working set
selection where a certain $\alpha$ can only be selected once.

Before introducing the novel working set selection, we give some
definitions:

\begin{definition}
$T^{k+1}, k \in \{1,\cdots,\frac{l}{2} \}$ is denoted as optimized
set, in which $\forall \alpha \in T$ has been selected and optimized
once in working set selection.
\end{definition}

\begin{definition}
$C^{k+1} \subset \{ 1,\dots,l\} \setminus T^{k}$ is called available
set, in which $\forall \alpha \in C$ has never been selected.
\end{definition}

For optimization problem~(1), $\alpha \equiv C \cup T \cup B$. 

Our method can be described as following: In iteration $k$, a set $B^{k}
\equiv \{\alpha_{i}^{k},\alpha_{j}^{k}\}$ will be selected from the
new method which considers the more accurate second order
information available set $C^{k}$ by \textit{WSS 3}, after optimization,
both sample in this set will be put into $T^{k+1}$, that is to say,
$T^{k+1} = T^{k} \cup B^{k}$, $C^{k+1} = C^{k} \setminus B^{k}$. In
other words, once a working set is selected and optimized, all samples
in it will not be examined anymore in the following selection. The
relationship between sets $B, C, T$ are shown in the Fig.~\ref{model}
\begin{figure}[!htp]
\begin{center}
\includegraphics[scale=0.4]{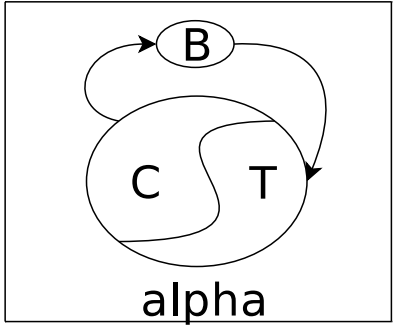}
\caption[model]{The Model of Working Set Selection Without Reselection~(\textit{WSS-WR})}%
\label{model}
\end{center}
\end{figure}

\setcounter{Step}{0}

\subsubsection{Working Set Selection Without Reselection~(WSS-WR)}
\begin{Step}
Select

\[
\overline{a_{ts}}  \equiv 
\begin{cases} a_{ts} & \quad\text{if}\quad a_{ts} >0 \\
\tau & \quad\text{otherwise}\quad
\end{cases}
\]

\begin{align}
i  \in arg \max_{t} \{ & -y_{t} \triangledown f(\alpha^{k})_{t} | t
\in I_{up}(\alpha^{k}) \cap t \not\in T^{k} \}, \\
j  \in arg \min_{t} \{ & -\frac{b_{it}^{2}}{\overline{a_{it}}} | t \in
I_{low}(\alpha^{k}) \cap t \not\in T^{k}, \notag \\ 
& -y_{t} \triangledown f(\alpha^{k})_{t} <  -y_{i} \triangledown f(\alpha^{k})_{i} \}.
\end{align}
\end{Step}

\begin{Step}
$T^{k+1} = T^{k} \cup B^{k}$, $C^{k+1} = C^{k} \setminus B^{k}$
\end{Step}

\begin{Step}
Return $B=\{i,j\}$
\end{Step}

where 
\begin{align}
a_{ij} & \equiv K_{ii}+K_{jj}-2K_{ij} >0 \notag \\
b_{ij} & \equiv -y_{i} \triangledown f(\alpha^{k})_{i} + y_{j}
\triangledown f(\alpha^{k})_{j} >0 \notag
\end{align}
We name this new method as Working Set Selection Without
Reselection~(\textit{WSS-WR}), in which all $\alpha$ will not be
reselected.

\subsection{Some Properties of \textit{WSS-WR} Model}
\textit{WSS-WR} has some special properties. In this
subsection, we simply prove some features of this new model.

\begin{theorem}
The values of all selected $\{\alpha_{i},\alpha_{j}\}
\subset B$ will always be 0.
\end{theorem}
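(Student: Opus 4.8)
The plan is to argue by induction on the iteration count $k$, tracking the equality constraint $y^{T}\alpha = 0$ together with the structure of the partition $\alpha \equiv C \cup T \cup B$. First I would observe that in \textit{WSS-WR} a new working set $B^{k} = \{\alpha_{i}^{k}, \alpha_{j}^{k}\}$ is always drawn from the available set $C^{k}$, and by Definition~2 every component in $C^{k}$ has never been selected; by the initialization in Step~1 of \textit{Algorithm 2} (where $\alpha^{1}$ is a feasible starting point, conventionally the zero vector since it trivially satisfies $0 \le \alpha_{i} \le C$ and $y^{T}\alpha = 0$), every as-yet-unselected component still holds its initial value $0$. So the key invariant to maintain is: at the start of iteration $k$, every $\alpha_{t}$ with $t \in C^{k}$ satisfies $\alpha_{t} = 0$.

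Next I would examine what the sub-problem~(2) does to the two newly selected variables $\alpha_{i}, \alpha_{j}$. Since both indices lie in $C^{k}$, the invariant gives $\alpha_{i}^{k} = \alpha_{j}^{k} = 0$. The linear equality constraint of the sub-problem, $y_{i}\alpha_{i} + y_{j}\alpha_{j} = -y_{N}^{T}\alpha_{N}^{k}$, must be analyzed: I would show the right-hand side is $0$. Indeed, $N = T^{k} \cup (C^{k}\setminus\{i,j\})$ after the selection, and $-y_{N}^{T}\alpha_{N}^{k} = -\sum_{t\in N} y_{t}\alpha_{t}^{k}$. Here the full feasibility condition $y^{T}\alpha^{k} = 0$ from~(1) combined with $\alpha_{i}^{k} = \alpha_{j}^{k} = 0$ forces $\sum_{t\in N} y_{t}\alpha_{t}^{k} = 0$ as well. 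Hence the sub-problem constraint reduces to $y_{i}\alpha_{i} + y_{j}\alpha_{j} = 0$, i.e. $\alpha_{j} = -y_{i}y_{j}\alpha_{i}$, a one-dimensional feasible segment through the origin. Minimizing the (strictly convex, because of the $\frac{\tau - \alpha_{ij}}{4}$ regularization term when $a_{ij} \le 0$, or because $a_{ij} > 0$ in the ordinary case) quadratic objective over this segment subject to $0 \le \alpha_{i} \le C$: I would need to show the unconstrained minimizer along the segment is at the origin. This follows because the gradient of~(2) at $\alpha_{i}^{k} = \alpha_{j}^{k} = 0$ projected onto the feasible direction is exactly the quantity that \textit{WSS-WR}'s Step~1 has just tested — but crucially, once the available set is exhausted of genuinely "violating" structure, or more directly: since $\alpha_i$ is constrained to $[0,C]$ and its partner moves oppositely, the box constraint on the partner variable (also in $[0,C]$, starting at $0$) pins the only jointly feasible point to $\alpha_i = \alpha_j = 0$ whenever $y_i y_j = 1$, and a short sign argument handles $y_i y_j = -1$.

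The main obstacle I anticipate is making the last step rigorous: showing the solution of~(2) is $\alpha_{i} = \alpha_{j} = 0$ rather than merely feasible there. When $y_{i}y_{j} = 1$ the constraint $\alpha_{i} + \alpha_{j} = 0$ together with $\alpha_{i},\alpha_{j}\ge 0$ immediately forces both to $0$, so that case is trivial. When $y_{i}y_{j} = -1$ the segment is $\alpha_{i} = \alpha_{j} \in [0,C]$ and one genuinely must invoke optimality: the directional derivative of the sub-problem objective at the origin along increasing $\alpha_i$ equals $(-e_B + Q_{BN}\alpha_N^k)_i + (-e_B + Q_{BN}\alpha_N^k)_j$, which one rewrites in terms of $\nabla f(\alpha^k)$ and shows is nonnegative precisely because of how $i$ and $j$ were selected (the inequality $-y_{t}\nabla f(\alpha^{k})_{t} < -y_{i}\nabla f(\alpha^{k})_{i}$ in Step~1 controls the relevant sign). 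Once the objective is nondecreasing along the only feasible ray from the origin, the minimizer is the origin itself, completing the induction: after optimization $\alpha_{i} = \alpha_{j} = 0$, these indices move into $T^{k+1}$, and every remaining index in $C^{k+1}$ still has value $0$, so the invariant propagates. I would close by remarking that this is why \textit{WSS-WR} trades iterations for speed — each selected pair is effectively "locked" at zero — and note the mild assumption that the algorithm starts from $\alpha^{1} = 0$.
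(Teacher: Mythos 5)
Your first paragraph already is the proof, and it is the same argument the paper gives: a decomposition step modifies only the variables in the working set, \textit{WSS-WR} never reselects an index, and the iterate is initialized at $\alpha^{1}=0$, so every index still in the available set $C^{k}$ retains its initial value $0$; in particular the pair $\{i,j\}$ drawn from $C^{k}$ satisfies $\alpha_{i}^{k}=\alpha_{j}^{k}=0$ \emph{at the moment of selection}. That is all the theorem asserts and all the paper needs (Lemma~1 uses it to place the chosen $i$ in $I_{1}$ and the chosen $j$ in $I_{4}$, which are defined by $\alpha=0$ at selection time), so you could and should stop there; your remark that one must assume $\alpha^{1}=0$ is a fair point the paper leaves implicit.

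Everything after that attempts to prove a strictly stronger statement --- that the sub-problem's \emph{optimal} values are again $0$ --- and that statement is false: if it were true the iterate would never change and the algorithm would output $\alpha=0$. The concrete error is the sign claim in your $y_{i}y_{j}=-1$ case. The selection rule requires $-y_{j}\nabla f(\alpha^{k})_{j} < -y_{i}\nabla f(\alpha^{k})_{i}$, i.e.\ $b_{ij}>0$, and the directional derivative of the sub-problem objective at the origin along the feasible direction $(d_{i},d_{j})=(y_{i},-y_{j})$ is exactly $y_{i}\nabla f(\alpha^{k})_{i}-y_{j}\nabla f(\alpha^{k})_{j}=-b_{ij}<0$, not nonnegative. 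The pair is chosen precisely because it is a violating pair, so the objective strictly decreases along the only feasible ray from the origin and the minimizer lies away from it (generically at $\alpha_{i}=\alpha_{j}=\min(b_{ij}/\overline{a_{ij}},\,C)>0$). This is also why the paper proves Lemma~2 ($\alpha_{1}^{new}=\alpha_{2}^{new}$) separately: the common new value is in general nonzero. Keep the invariant argument; drop the analysis of the sub-problem's solution.
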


\begin{proof}
Firstly, each $\alpha$ can only be selected once, which means once a
$\alpha$ is chosen for optimization, the value of it has never been
modified before; secondly, all $\alpha$ will be initialized as 0 at
the beginning of the algorithm. Thus, the values of all selected
$\{\alpha_{i},\alpha_{j}\} \subset B$ will be 0.
\end{proof}

\begin{theorem}
The algorithm terminates after a maximum of $\lceil
\frac{l}{2} \rceil$ iterations.
\end{theorem}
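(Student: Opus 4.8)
The plan is to run a straightforward counting argument on the optimized set $T^k$: I will show that every completed iteration moves exactly two \emph{fresh} indices into $T^k$, so that $T^k$ exhausts the index set $\{1,\dots,l\}$ after at most $\lceil l/2 \rceil$ steps, at which point no admissible working set remains and the algorithm must stop.

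First I would set up the bookkeeping. The algorithm starts with $T^1 = \emptyset$ and $C^1 = \{1,\dots,l\}$, and Step 2 of \textit{WSS-WR} updates $T^{k+1} = T^k \cup B^k$ with $B^k = \{i,j\}$. The key point is that $B^k$ contributes two indices not already in $T^k$: in Step 1 the search for both $i$ and $j$ is explicitly restricted to $t \notin T^k$, so $B^k \cap T^k = \emptyset$; and because $i$ is taken from $I_{up}(\alpha^k)$ while $j$ is taken from $I_{low}(\alpha^k)$ subject to the strict inequality $-y_t \triangledown f(\alpha^k)_t < -y_i \triangledown f(\alpha^k)_i$, the index $i$ itself is excluded from the candidates for $j$, hence $i \neq j$ and $|B^k| = 2$. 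Consequently $|T^{k+1}| = |T^k| + 2$, and a trivial induction gives $|T^{k+1}| = 2k$ for every iteration $k$ that the algorithm actually completes.

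Next I would convert this into the iteration bound. Since $T^{k+1} \subseteq \{1,\dots,l\}$ we must have $2k = |T^{k+1}| \le l$, i.e. $k \le l/2$; equivalently, at the start of iteration $k$ the available set has size $|C^k| = l - 2(k-1)$, and \textit{WSS-WR} can return a two-element working set only while $|C^k| \ge 2$, which again forces $k \le l/2$. Hence the number of completed iterations is at most $\lfloor l/2 \rfloor \le \lceil l/2 \rceil$. If at some earlier iteration the restricted index sets $I_{up}(\alpha^k)\setminus T^k$ or $I_{low}(\alpha^k)\setminus T^k$ (with the violating-pair condition) are already empty, then no working set can be produced and the algorithm terminates even sooner, so the stated bound still holds.

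The argument is essentially elementary counting, so the only step needing genuine care is the claim $i \neq j$ together with $B^k \cap T^k = \emptyset$; I would justify it exactly as in the correctness discussion of \textit{WSS 3}, relying on the $I_{up}$/$I_{low}$ membership and the strict inequality built into Step 1 of \textit{WSS-WR}. Everything else — the induction $|T^{k+1}| = 2k$ and the inequality $2k \le l$ — is immediate.
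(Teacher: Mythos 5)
Your proposal is correct and follows essentially the same counting argument as the paper: each iteration removes two indices from the available set, so the set is exhausted after at most $\lceil l/2 \rceil$ iterations. You simply make explicit the details the paper leaves implicit (that $B^k \cap T^k = \emptyset$ and $i \neq j$, hence exactly two fresh indices per step), which is a welcome but not substantively different refinement.
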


\begin{proof}
Firstly, the algorithm terminates if there is no sample left
in $C^{k}$ or certain optimization conditions are reached; secondly,
in each iteration, two samples will be selected and deleted from the
available set. Thus, under the worst situation, after $\lceil
\frac{l}{2} \rceil$ iterations, there will be no samples left in the
active set, the algorithm then terminates.
\end{proof}

\begin{lemma}
In the Model of \textit{WSS-WR}, $I_{up} \equiv I_{1}$ and $I_{low}
\equiv I_{4}$ .
\end{lemma}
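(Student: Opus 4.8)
The plan is to read the claim off Theorem~1 together with the standard five-set partition of the index set used in all Keerthi/Fan/Chen-type analyses. First I would recall that for a feasible $\alpha$ one writes $\{1,\dots,l\}=I_{0}\cup I_{1}\cup I_{2}\cup I_{3}\cup I_{4}$, where $I_{0}=\{t:0<\alpha_{t}<C\}$, $I_{1}=\{t:y_{t}=1,\ \alpha_{t}=0\}$, $I_{2}=\{t:y_{t}=-1,\ \alpha_{t}=C\}$, $I_{3}=\{t:y_{t}=1,\ \alpha_{t}=C\}$, $I_{4}=\{t:y_{t}=-1,\ \alpha_{t}=0\}$, and that by definition $I_{up}(\alpha)=I_{0}\cup I_{1}\cup I_{2}$ and $I_{low}(\alpha)=I_{0}\cup I_{3}\cup I_{4}$. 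In \textit{WSS-WR} the indices actually eligible in Step~1 are only those with $t\not\in T^{k}$, so it suffices to intersect each of $I_{up}$ and $I_{low}$ with $\{1,\dots,l\}\setminus T^{k}$.

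The key step is the observation, already contained in the proof of Theorem~1, that every index $t\not\in T^{k}$ still satisfies $\alpha_{t}^{k}=0$: such a $t$ has never belonged to any previously solved sub-problem, hence its value has never been modified from the initial value $0$. Now intersect. An index with $\alpha_{t}^{k}=0$ cannot lie in $I_{0}$ (which requires $0<\alpha_{t}<C$), nor in $I_{2}$ or $I_{3}$ (which require $\alpha_{t}=C>0$); it lies in $I_{1}$ precisely when $y_{t}=1$ and in $I_{4}$ precisely when $y_{t}=-1$. Hence $I_{up}(\alpha^{k})\cap(\{1,\dots,l\}\setminus T^{k})=I_{1}$ and $I_{low}(\alpha^{k})\cap(\{1,\dots,l\}\setminus T^{k})=I_{4}$, which is the assertion $I_{up}\equiv I_{1}$, $I_{low}\equiv I_{4}$ in the restricted sense the lemma intends; in particular the extra condition $-y_{t}\triangledown f(\alpha^{k})_{t}<-y_{i}\triangledown f(\alpha^{k})_{i}$ in Step~1 is then the only thing further cutting down the candidate set for $j$.

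I expect the only real subtlety — and the place a careful write-up must be explicit — to be the scope of the symbol ``$\equiv$'': the literal sets $I_{up}$ and $I_{low}$ are of course not equal to $I_{1}$ and $I_{4}$ for a general feasible point, so the statement must be understood relative to the eligible indices $t\not\in T^{k}$, and it uses the standing assumption $C>0$ to exclude $\alpha_{t}=C$. Beyond that the argument is immediate from Theorem~1 and the definitions, so no further estimates are needed.
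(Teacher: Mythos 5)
Your proof is correct and follows essentially the same route as the paper: invoke Theorem~1 to get $\alpha_{t}=0$ for every index still eligible for selection, then observe that such an index cannot lie in $I_{0}$, $I_{2}$, or $I_{3}$, so $I_{up}$ and $I_{low}$ collapse to $I_{1}$ and $I_{4}$ respectively. You are in fact more careful than the paper, which leaves implicit both the restriction to $t\not\in T^{k}$ and the assumption $C>0$ that you rightly flag.
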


\begin{proof}
According to the Theorem 1, the $\alpha \equiv 0$ which is chosen by
\textit{WSS-WR}. And Keerthi~\cite{Keerthi.2001} define the $I_{up},
I_{low}$ as:
\begin{align}
I_{0} \equiv \{i: 0 < \alpha_{i} < C\} \notag \\
I_{1} \equiv \{i: y_{i} = +1, \alpha_{i} = 0\} \\
I_{2} \equiv \{i: y_{i} = -1, \alpha_{i} = C\} \notag\\
I_{3} \equiv \{i: y_{i} = +1, \alpha_{i} = C\} \notag\\
I_{4} \equiv \{i: y_{i} = -1, \alpha_{i} = 0\} 
\end{align}

and $I_{up} \equiv \{I_{0} \cup I_{1} \cup I_{2}\}, I_{low} \equiv \{I_{0} \cup
I_{3} \cup I_{4}\}$ 

Thus $I_{up} \equiv I_{1}$ and $I_{low} \equiv
I_{4}$ in \textit{WSS-WR} model.
\end{proof}

\begin{lemma}
In the Model of \textit{WSS-WR}, $\alpha_{1}^{new} =
\alpha_{2}^{new}$.
\end{lemma}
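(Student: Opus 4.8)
The plan is to analyze the sub-problem~(2) that \textit{WSS-WR} actually solves when the working set $B = \{i,j\}$ is selected, using the two facts already available: by Theorem~1 every selected variable satisfies $\alpha_i^k = \alpha_j^k = 0$, and by Lemma~1 we have $i \in I_{up} \equiv I_1$ (so $y_i = +1$) and $j \in I_{low} \equiv I_4$ (so $y_j = -1$). First I would substitute $\alpha_i^k = \alpha_j^k = 0$ and the stated signs of $y_i, y_j$ into the linear equality constraint of~(2), namely $y_i\alpha_i + y_j\alpha_j = -y_N^T\alpha_N^k$. Since the remaining variables are untouched and were initialized at $0$ while the feasibility $y^T\alpha = 0$ is maintained, the right-hand side $-y_N^T\alpha_N^k$ equals the contribution that $\alpha_i^k, \alpha_j^k$ had in the previous global equality, which is $0$; hence the constraint reduces to $y_i\alpha_i + y_j\alpha_j = 0$, i.e. $\alpha_i - \alpha_j = 0$, giving $\alpha_i^{new} = \alpha_j^{new}$.

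The key steps in order: (i) invoke Theorem~1 to set $\alpha_i^k = \alpha_j^k = 0$; (ii) invoke Lemma~1 to fix $y_i = +1$, $y_j = -1$; (iii) argue that the aggregate $y^T\alpha$ stays at $0$ throughout the algorithm and that, because none of the variables in $N$ relative to this step has ever been modified either (they are still $0$ unless they lie in $T^k$, and even those contributions are accounted for), the equality constraint for this sub-problem is homogeneous; (iv) solve the one-dimensional reduced problem: the equality $y_i\alpha_i + y_j\alpha_j = 0$ with $y_i = +1, y_j = -1$ forces $\alpha_i = \alpha_j$ for any feasible point, in particular for the minimizer $(\alpha_i^{new}, \alpha_j^{new})$. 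Note that in the notation of the lemma statement, $\alpha_1 \equiv \alpha_i$ and $\alpha_2 \equiv \alpha_j$ are the two components of $B$.

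The main obstacle I anticipate is justifying step~(iii) cleanly — that is, making precise why the right-hand side $-y_N^T\alpha_N^k$ of the equality constraint is $0$ at every iteration in which \textit{WSS-WR} is invoked. The honest argument is: initially all $\alpha = 0$ so $y^T\alpha = 0$; each sub-problem~(2) is solved subject to preserving $y_i\alpha_i + y_j\alpha_j = -y_N^T\alpha_N^k$, which by induction is the statement that $y^T\alpha$ is invariant, so it remains $0$ forever; therefore at the current step $-y_N^T\alpha_N^k = y_i\alpha_i^k + y_j\alpha_j^k = 0$ by Theorem~1. Once this is in place, the conclusion $\alpha_i^{new} = \alpha_j^{new}$ is immediate from the sign pattern of Lemma~1, with essentially no computation. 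A secondary subtlety is that the extra proximal term $\frac{\tau - a_{ij}}{4}((\alpha_i-\alpha_i^k)^2+(\alpha_j-\alpha_j^k)^2)$ in~(2) does not affect this: the equality constraint alone pins down the relation between $\alpha_i^{new}$ and $\alpha_j^{new}$ regardless of which branch of Step~3 is taken, so the result holds uniformly.
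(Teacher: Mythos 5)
Your proof is correct and follows essentially the same route as the paper: both arguments combine the preservation of the equality constraint $y_i\alpha_i + y_j\alpha_j = -y_N^T\alpha_N^k$ with Theorem~1 (the old values are $0$, so the right-hand side is $0$) and Lemma~1 (which forces $y_i = +1$, $y_j = -1$) to conclude $\alpha_i^{new} = \alpha_j^{new}$. Your version is somewhat more explicit than the paper's about why $-y_N^T\alpha_N^k = 0$ (via the invariance of $y^T\alpha = 0$), but the substance is identical.
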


\begin{proof}
During the SMO algorithm searches through the feasible region of the dual
problem and minimizes the objective function~(1).

Because $\sum_{i=1}^{N} y_{i} \alpha_{i} =0$, we have
\begin{align}
& y_{1}\alpha_{1}^{new} + y_{2} \alpha_{2}^{new} =
  y_{1}\alpha_{1}^{old} + y_{2} \alpha_{2}^{old} \notag\\ 
\intertext{Since}
& \alpha_{1}^{old} \in I_{up}, \alpha_{2}^{old} \in I_{low}
  \notag\\ 
\intertext{Therefore}
& y_{1}\alpha_{1}^{new} + y_{2} \alpha_{2}^{new}
  = 0 \notag \\ 
\Rightarrow & y_{1}\alpha_{1}^{new} = -y_{2} \alpha_{2}^{new}
  \notag \\ 
\intertext{According to the \textit{Lemma 1}}
& y_{1} = -y_{2} \notag \\ 
So \ &  \alpha_{1}^{new} = \alpha_{2}^{new} \notag
\end{align}
\end{proof}

\section{Computational Comparison}
In this section we compare the performance of our model against
\textit{WSS 3}. The comparison between \textit{WSS 1} and \textit{WSS
  3} have been done by Rong-En Fan \textit{et al.}~\cite{Fan.2005}.

\subsection {Data and Experimental Settings}
Some small datasets~(around 1,000 samples) including nine binary
classification and two regression problems are investigated under
various settings. And the large~(more than 30,000 instances)
classification problems are also taken into account. We select splice
from the Delve archive~(http://www.cs.toronto.edu/$\sim$delve). Problems
german.numer, heart and australian are from the Statlog collection
~\cite{Michie.1994}. Problems fourclass is from~\cite{TKH96a} and be
transformed to a two-class set. The datasets diabetes, breast-cancer,
and mpg are from the UCI machine learning repository
~\cite{Blake.1998}. The dataset mg from~\cite{GWF01a}. Problems a1a
and a9a are compiled in Platt~\cite{Platt.1998} from the UCI "adult"
dataset. Problems w1a and w8a are also from Platt~\cite{Platt.1998}.
The problem IJCNN1 is from the first problem of IJCNN 2001 challenge
~\cite{Prokhorov.2001}.

For most datasets, each attribute is linearly scaled to $[-1,1]$
except a1a, a9a, w1a, and w8a, because they take two values, 0 and 1.
All data are available at http://www.csie.ntu.edu.tw/
$\sim$cjlin/libsvmtools/. We set $ \tau = 10^{-12}$ both in \textit{WSS 3} and
\textit{WSS-WR}.

Because different SVMs parameters and kernel parameters affect the
training time, it is difficult to evaluate the two methods under
every parameter setting. For a fair comparison, we use the
experimental procedure which Rong-En Fan \textit{et al.}~\cite{Fan.2005} used:

1. "Parameter selection" step: Conduct five-fold cross validation to
find the best one within a given set of parameters.

2. "Final training" step: Train the whole set with the best parameter
to obtain the final model.

Since we concern the performance of using different kernels, we
thoroughly test four commonly used kernels:

1. RBF kernel:
\[
K(x_{i},x_{j})=e^{-\gamma \Vert x_{i}-x_{j} \Vert ^{2}}
\]

2. Linear kernel:
\[
K(x_{i},x_{j})=x_{i}^{T} x_{j}
\]

3. Polynomial kernel:
\[
K(x_{i},x_{j})=(\gamma (x_{i}^{T} x_{j}+1))^{d}
\]

4. Sigmoid kernel:
\[
K(x_{i},x_{j})= tanh (\gamma x_{i}^{T} x_{j} +d)
\]

\begin{table}
\centering
\begin{tabular}[!htp]{|c|c|c|c|}
\hline Kernel & Problems Type & $log_{2}^{C}$  & $log_{2}^{\gamma}$ \\
\hline RBF & Classification & -5,15,2 & 3,-15,-2   \\
\hline
           & Regression & -1,15,2 & 3,-15,-2 \\
\hline Linear & Classification & -3,5,2 &  \\
\hline
              & Regression & -3,5,2 &  \\
\hline Polynomial & Classification & -3,5,2 & -5,-1,1 \\
\hline
                  & Regression & -3,5,2 & -5,-1,1  \\
\hline Sigmoid & Classification & -3,12,3 & -12,3,3  \\
\hline
               & Regression & $\gamma= \frac{1}{\# Features}$ & -8,-1,3 \\
\hline
\end{tabular}
\caption{ Parameters used for various kernels: values of each parameter are from a uniform
discretization of an interval. We list the left, right end points and the space for
discretization. For example:-5,15,2. for $\log 2^{C}$ means $\log 2^{C} =
{-5,-3,\dots,15}$} \label{parameter_settings}
\end{table}

Parameters used for each kernel are listed in
Table~\ref{parameter_settings}. It is important to check how \textit{WSS-WR}
performs after incorporating shrinking and caching strategies. We
consider various settings:

1. With or without shrinking. 

2. Different cache size: First a 100MB cache allows the whole kernel
matrix to be stored in the computer memory. Second, we allocate only
100KB memory, so cache miss may happen and more kernel evaluations are
needed. The second setting simulates the training of large-scale sets
whose kernel matrices cannot be stored.

\subsection{Numerical Experiments}
\subsubsection{Comparison of Cross Validation Accuracy of Classification}
First, the grid method is applied. Cross validation accuracy is
compared during "parameters selection" and "final training".

\begin{table}
\begin{center}
\begin{tabular}[!htp]{|c|c|c|c|c|c|}
\hline method & a1a & w1a & aust. & spli. & brea. \\
\hline \textit{WSS-WR} & 83.4268 & 97.7796 & 86.2319 & 86 & 97.3646 \\
\hline \textit{WSS 3} & 83.8006 & 97.9814 & 86.8116 & 86.8 & 97.2182 \\
\hline
\hline method & diab. & four. & germ. & heart &\\
\hline \textit{WSS-WR} & 77.9948 & 100 & 75.8 & 85.1852 & \\
\hline \textit{WSS 3} &  77.474 & 100 & 77.6 & 84.4444 & \\
\hline
\end{tabular}
\caption{ Accuracy comparison between \textit{WSS 3} and \textit{WSS-WR}~(RBF) } \label{RBF_accuracy_100M_with}
\end{center}
\end{table}

\begin{table}
\begin{center}
\begin{tabular}[!htp]{|c|c|c|c|c|c|}
\hline method & a1a & w1a & aust. & spli. & brea. \\
\hline \textit{WSS-WR} & 83.053 & 97.4162 & 85.5072 & 80.4 & 97.2182 \\
\hline \textit{WSS 3} & 83.8629 & 97.8199 & 85.5072 & 79.5 & 97.2182 \\
\hline
\hline method & diab. & four. & germ. & heart &\\
\hline \textit{WSS-WR} & 76.4323 & 77.8422 & 75.3 & 83.7037 & \\
\hline \textit{WSS 3} &  77.2135 & 77.6102 & 77.9 & 84.4444 &\\
\hline
\end{tabular}
\caption{ Accuracy comparison between \textit{WSS 3} and \textit{WSS-WR}~(Linear) } \label{Linear_accuracy_100M_with}
\end{center}
\end{table}

\begin{table}
\begin{center}
\begin{tabular}[!htp]{|c|c|c|c|c|c|}
\hline method & a1a & w1a & aust. & spli. & brea. \\
\hline \textit{WSS-WR} & 83.3645 & 97.8603 & 86.087 & 82.1 & 97.6574 \\
\hline \textit{WSS 3} & 83.3645 & 97.6181 & 85.7971 & 82.7 & 97.511 \\
\hline
\hline method & diab. & four. & germ. & heart &\\
\hline \textit{WSS-WR} & 76.6927 & 78.6543 & 75.1 & 84.0741 &\\
\hline \textit{WSS 3} & 77.0833 & 79.6984 & 75.2 & 84.8148 &\\
\hline
\end{tabular}
\caption{ Accuracy comparison between \textit{WSS 3} and \textit{WSS-WR}~(Polynomial) } \label{Polynomial_accuracy_100M_with}
\end{center}
\end{table}

\begin{table}
\begin{center}
\begin{tabular}[!htp]{|c|c|c|c|c|c|}
\hline method & a1a & w1a & aust. & spli. & brea. \\
\hline \textit{WSS-WR} & 83.6137 & 97.6988 & 85.7971 & 80.4 & 97.0717\\
\hline \textit{WSS 3} & 83.4268 & 97.8603 & 85.6522 & 80.5 &  97.2182 \\
\hline
\hline method & diab. & four. & germ. & heart &\\
\hline \textit{WSS-WR}  & 77.2135 & 77.8422 & 75.8 & 84.8148 & \\
\hline \textit{WSS 3} & 77.2135 & 77.8422 & 77.8 & 84.0744 & \\
\hline
\end{tabular}
\caption{ Accuracy comparison between \textit{WSS 3} and \textit{WSS-WR}~(Sigmoid) } \label{Sigmoid_accuracy_100M_with}
\end{center}
\end{table}

We test various situations concerning all the commonly used kernels, with
and without shrinking technique and 100MB/100KB cache. The Table
\ref{RBF_accuracy_100M_with}-~\ref{Sigmoid_accuracy_100M_with} show
that cross validation accuracy between \textit{WSS-WR} and \textit{WSS
  3} method are almost the same. To be specifically, $|
Accuracy_{\textit{WSS-WR}}-Accuracy_{\textit{WSS 3}} | <0.026$. And in most of the
datasets, the accuracy of \textit{WSS 3} outperform that of
\textit{WSS-WR}. There are also several datasets in which
\textit{WSS-WR} performs even more accurate than \textit{WSS 3}.
Besides, great improvement are made both in the number of iterations
as well as the consumption of time.

\subsubsection{Comparison of Cross Validation Mean Squared Error of Regression}
\begin{table}
\begin{center}
\begin{tabular}[!htp]{|c|c|c|c|c|}
\hline Kernel Methods & \multicolumn{2}{|c|}{RBF} & \multicolumn{2}{|c|}{Linear} \\
\hline Method & \textit{WSS-WR} & \textit{WSS 3} & \textit{WSS-WR} & \textit{WSS 3}  \\
\hline MPG & 6.9927	& 6.4602 & 12.325 & 12.058\\
\hline MG & 0.01533 &	0.014618 & 0.02161 &	0.02138 \\
\hline Kernel Methods & \multicolumn{2}{|c|}{Polynomial} & \multicolumn{2}{|c|}{Sigmoid} \\
\hline Method & \textit{WSS-WR} & \textit{WSS 3} & \textit{WSS-WR} & \textit{WSS 3} \\
\hline MPG &  0.25 & 0.5 & 14.669 & 14.22\\
\hline MG &  0.019672 &	0.018778 & 0.023228 & 0.023548\\
\hline
\end{tabular}
\caption{ Cross validation mean squared error } \label{cross validation mean squared error}
\end{center}
\end{table}

The Table~\ref{cross validation mean squared error} shows that cross
validation mean squared error does not differ much in regression
between \textit{WSS-WR} and \textit{WSS 3}.

\subsubsection{Iteration and time ratios between \textit{WSS-WR} and \textit{WSS 3}}
After comparison of cross validation accuracy of classification and
cross validation mean squared error of regression, We illustrate the
iteration and time ratios between \textit{WSS-WR} and \textit{WSS 3}.

For each kernel, we give two figures showing results of "parameter
selection" and "final training" steps, respectively. We further
separate each figure to two situations: without/with shrinking, and
present five ratios between using \textit{WSS-WR} and using
\textit{WSS 3}:
\[
ratio 1 \equiv \frac{\quad\text{time of \textit{WSS-WR}~(100M cache,shrinking)}\quad}{\quad\text{time of \textit{WSS 3}~(100M cache,shrinking)}\quad}
\]
\[
ratio 2 \equiv \frac{\quad\text{time of \textit{WSS-WR}~(100M cache,nonshrinking)}\quad}{\quad\text{time of \textit{WSS 3}~(100M cache,nonshrinking)}\quad}
\]
\[
ratio 3 \equiv \frac{\quad\text{time of \textit{WSS-WR}~(100K cache,shrinking)}\quad}{\quad\text{time of \textit{WSS 3}~(100K cache,shrinking)}\quad}
\]
\[
ratio 4 \equiv \frac{\quad\text{time of \textit{WSS-WR}~(100K
    cache,nonshrinking)}\quad}{\quad\text{time of \textit{WSS 3}~(100K cache,nonshrinking)}\quad}
\]
\[
ratio 5 \equiv \frac{\quad\text{Total iteration of \textit{WSS-WR}}\quad}{\quad\text{Total iteration of \textit{WSS 3}}\quad}
\]

\begin{figure}[!htp]
\begin{center}
\includegraphics[width=.4\textwidth,height=.15\textwidth]{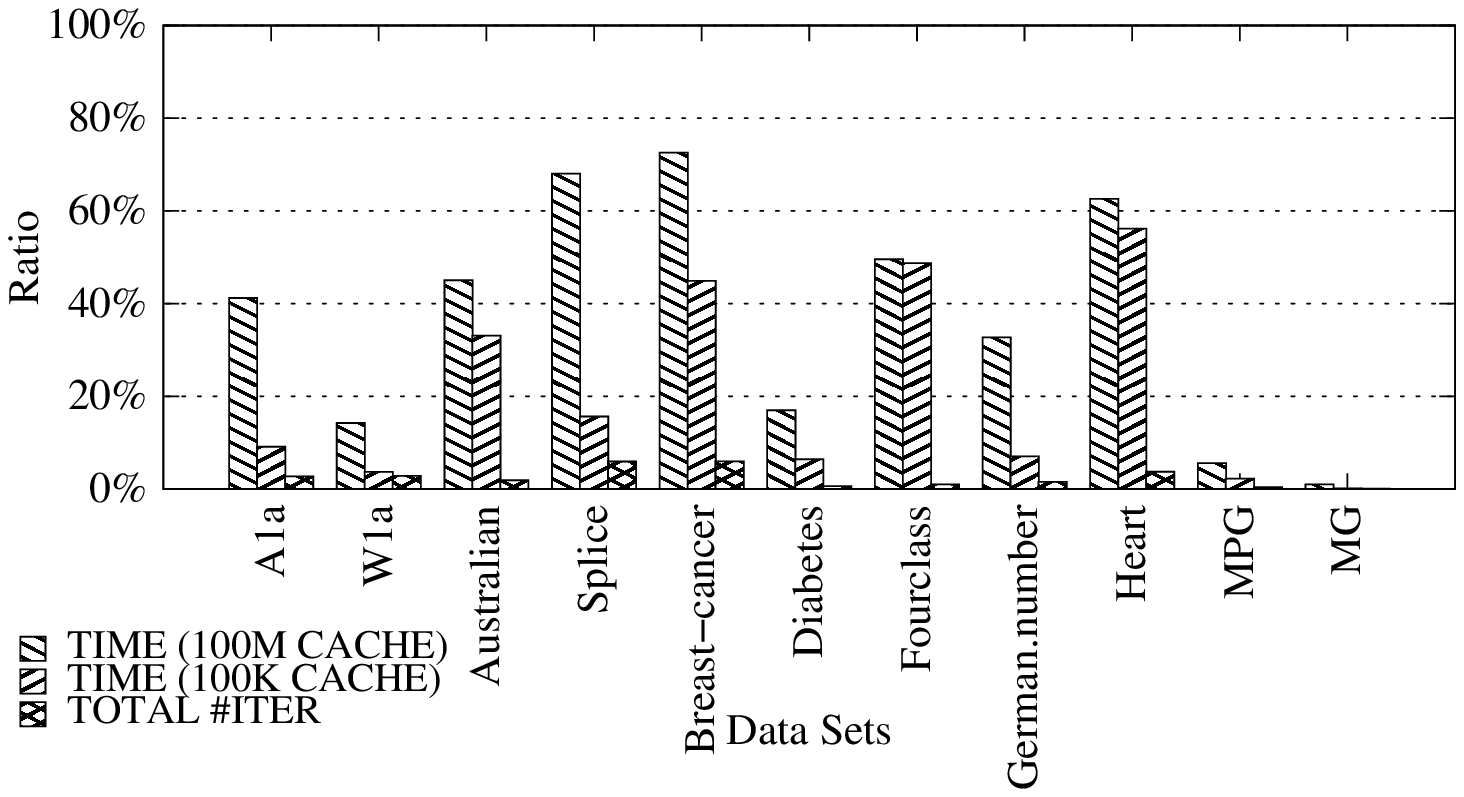}
\\
\includegraphics[width=.4\textwidth,height=.15\textwidth]{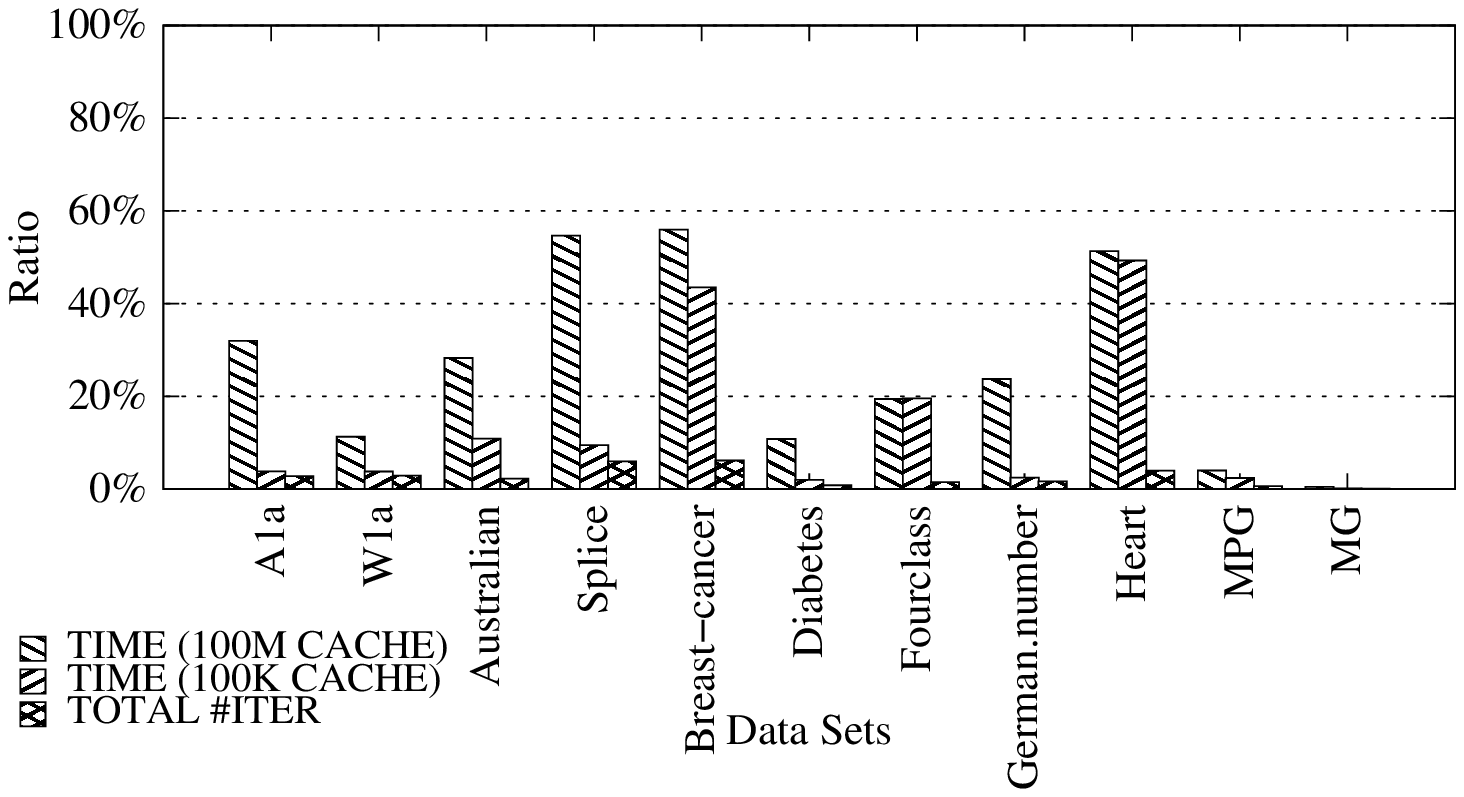}
\caption[rbf_ps]{Iteration and time ratios between \textit{WSS-WR} and \textit{WSS 3} using the RBF kernel for the "parameter selection" step~(top: with shrinking, bottom: without shrinking).}%
\label{RBF ps ratio}
\end{center}
\end{figure}

\begin{figure}[!htp]
\begin{center}
\includegraphics[width=.4\textwidth,height=.15\textwidth]{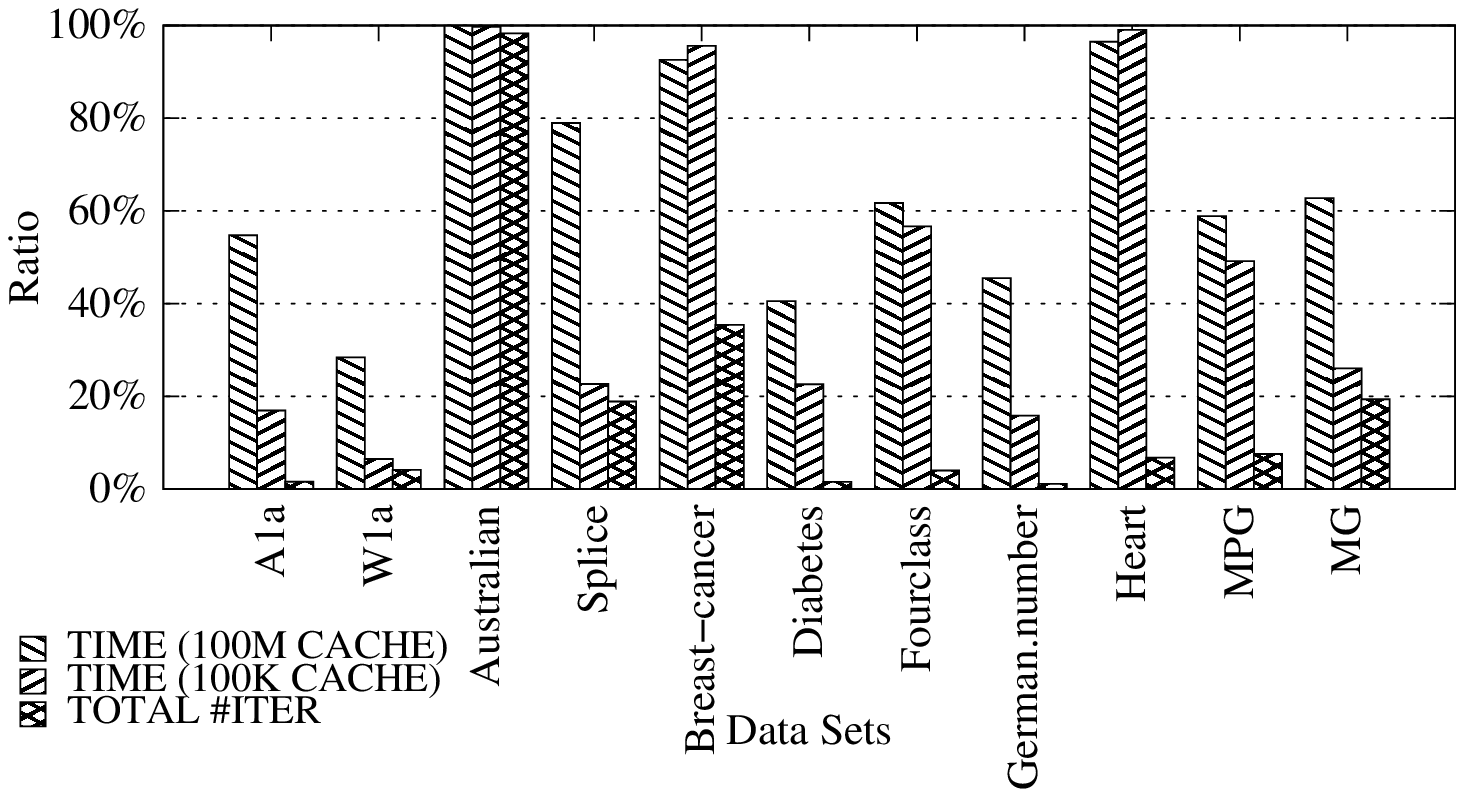} \\
\includegraphics[width=.4\textwidth,height=.15\textwidth]{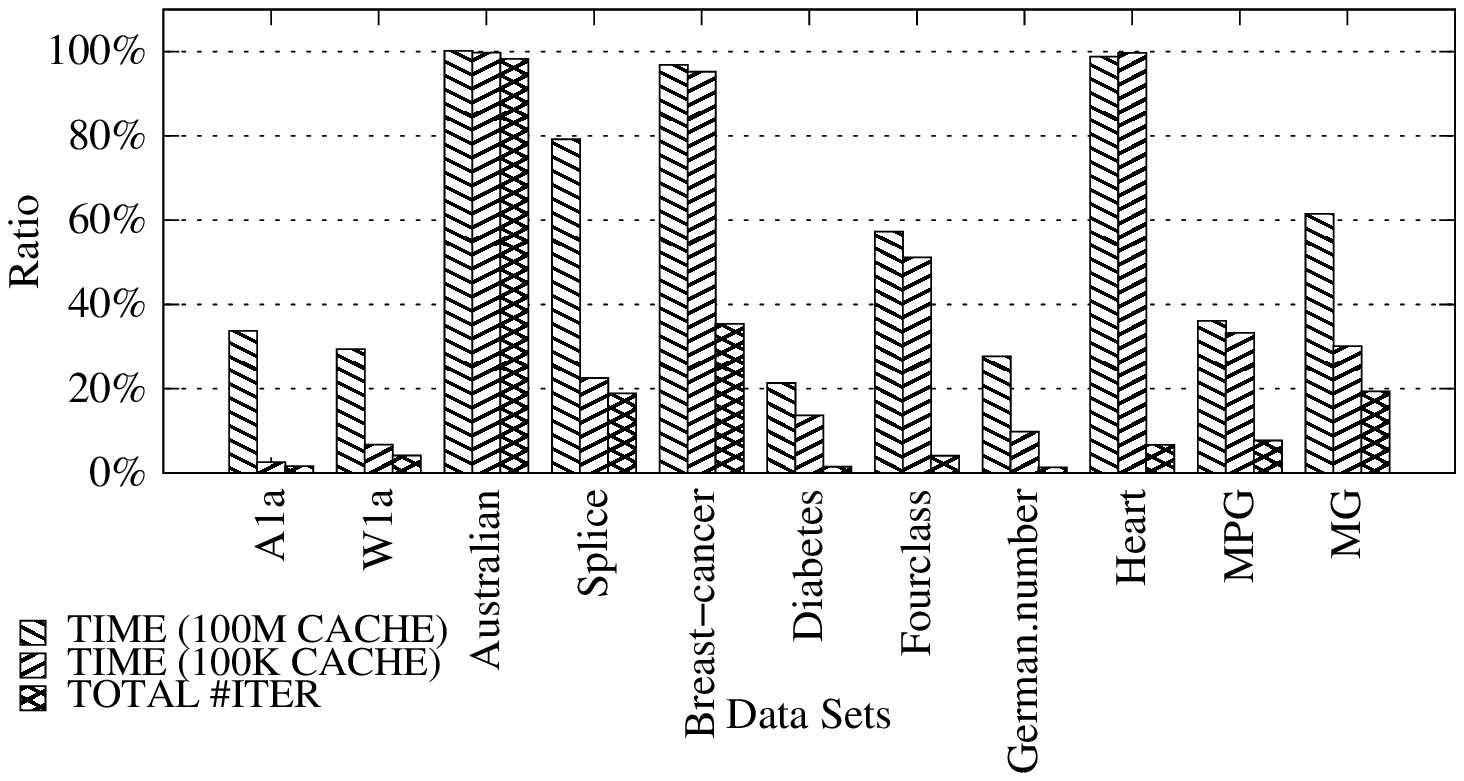}
\caption[rbf_ft]{Iteration and time ratios between \textit{WSS-WR} and \textit{WSS 3} using the RBF kernel for the "final training" step~(top: with shrinking, bottom: without shrinking).}%
\label{RBF ft ratio}
\end{center}
\end{figure}

\begin{figure}[!htp]
\begin{center}
\includegraphics[width=.4\textwidth,height=.15\textwidth]{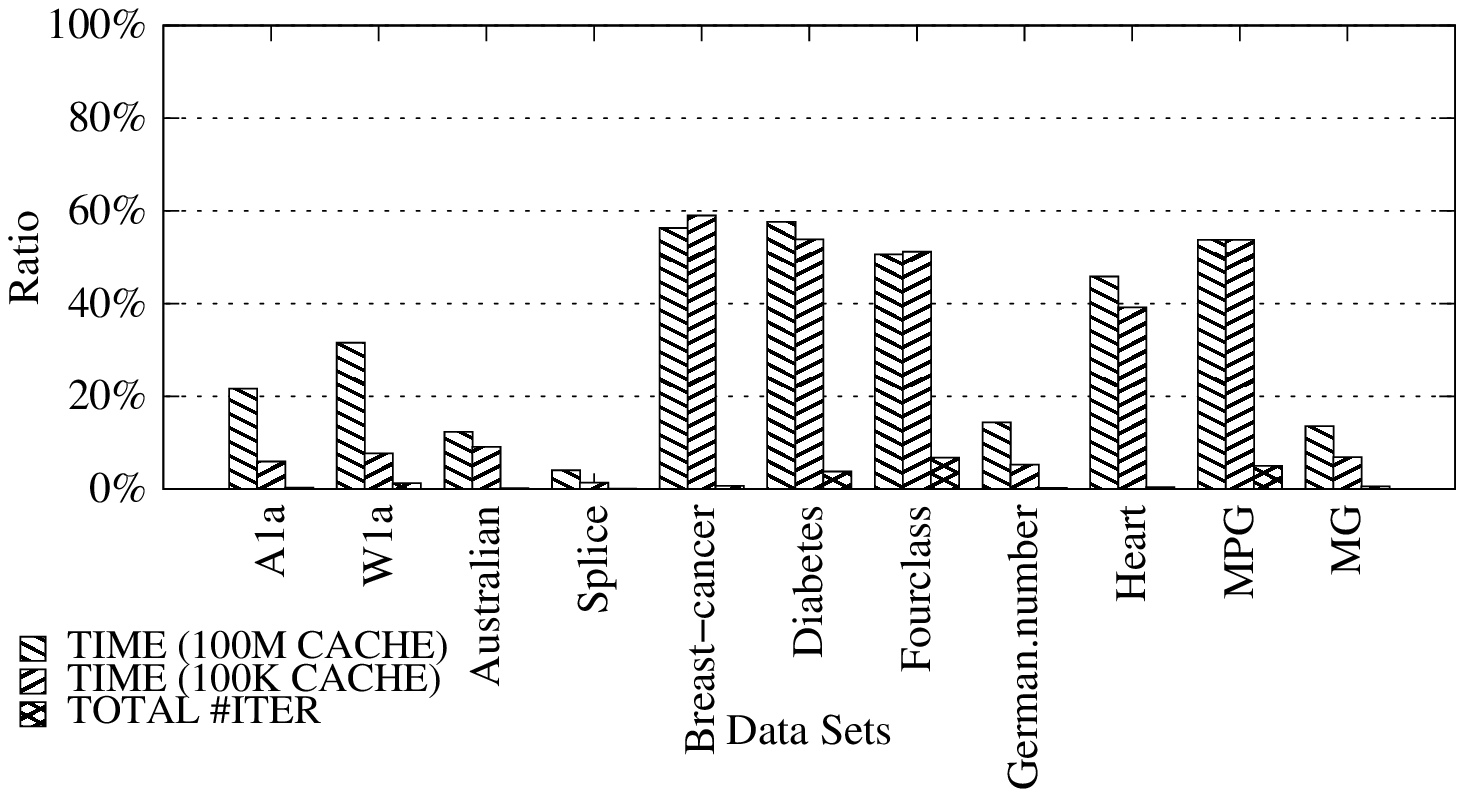} \\
 \includegraphics[width=.4\textwidth,height=.15\textwidth]{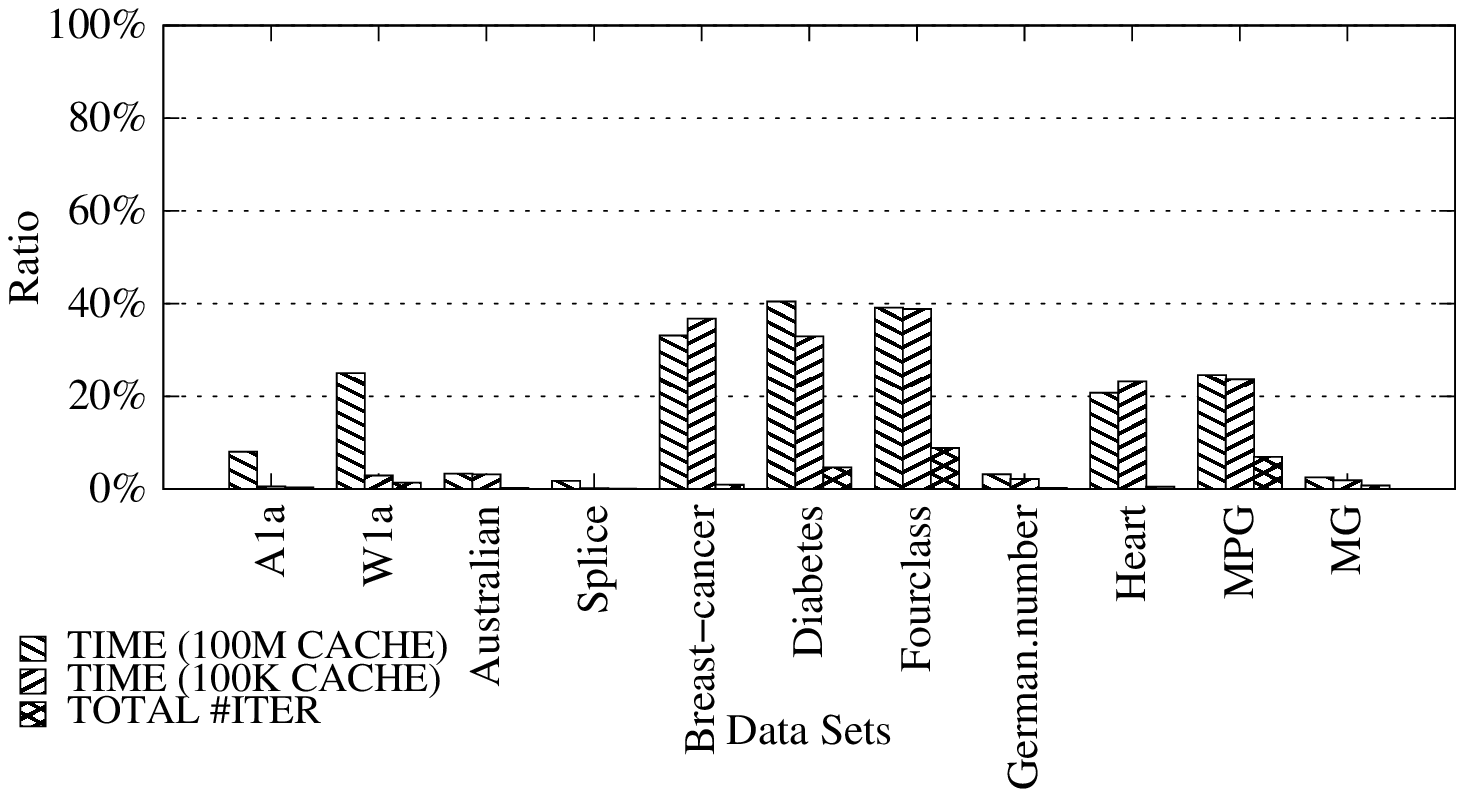}
\caption[Linear_ps]{Iteration and time ratios between \textit{WSS-WR} and \textit{WSS 3} using the Linear kernel for the "parameter selection" step~(top: with shrinking, bottom: without shrinking).}%
\label{Linear ps ratio}
\end{center}
\end{figure}

\begin{figure}[!htp]
\begin{center}nn
\includegraphics[width=.4\textwidth,height=.15\textwidth]{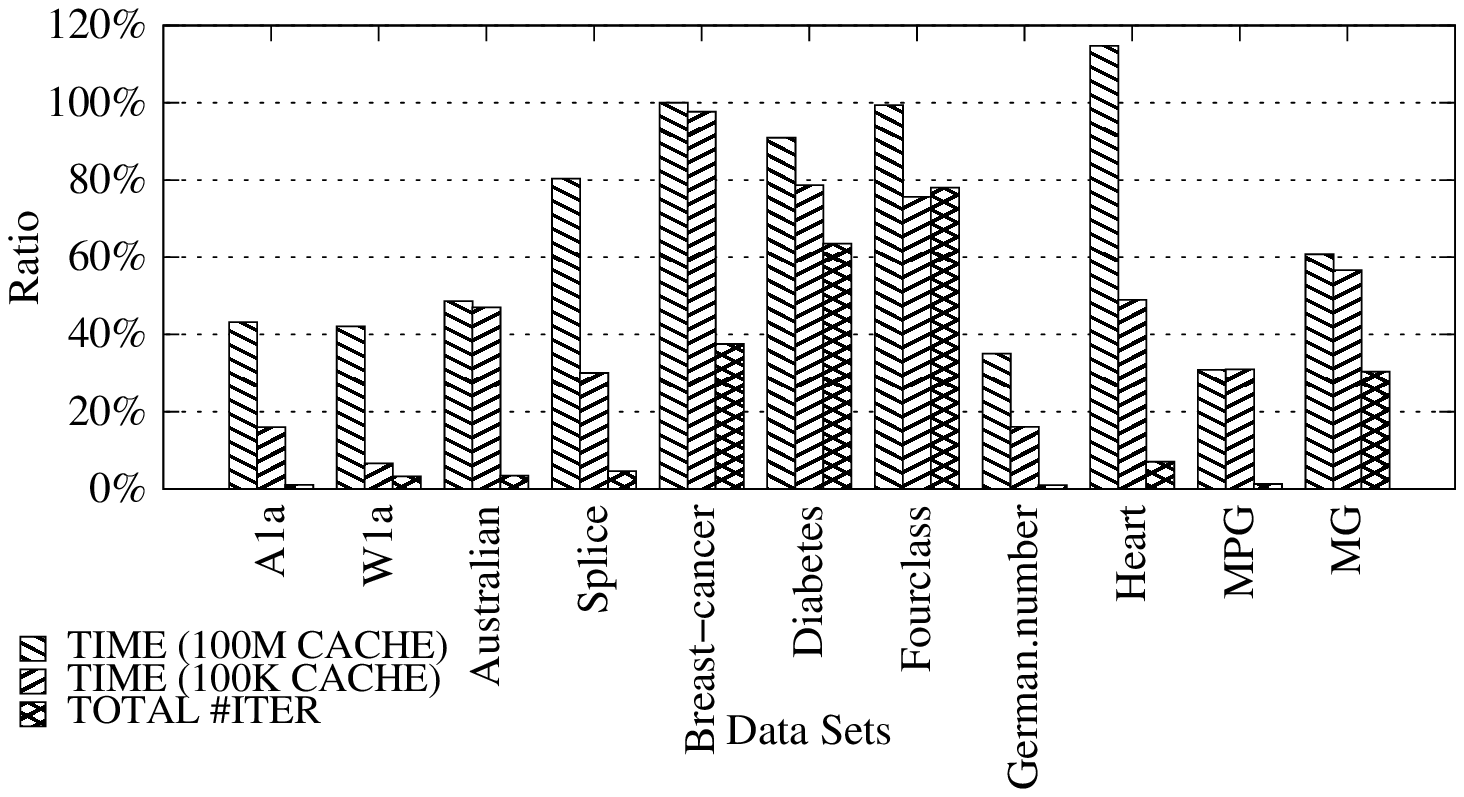} \\
\includegraphics[width=.4\textwidth,height=.15\textwidth]{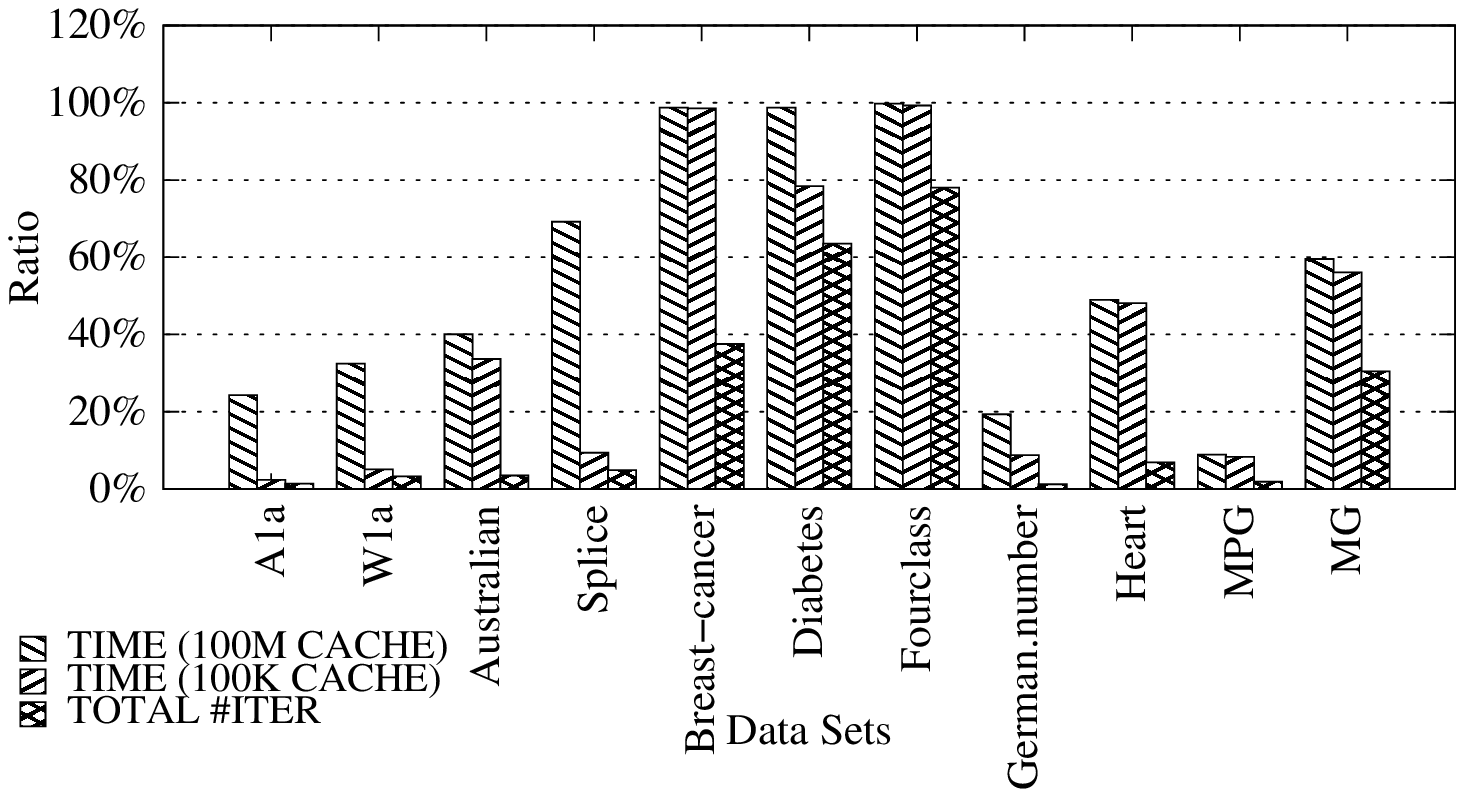}
\caption[Linear_ft]{Iteration and time ratios between \textit{WSS-WR} and \textit{WSS 3} using the Linear kernel for the "final training" step~(top: with shrinking, bottom: without shrinking).}%
\label{Linear ft ratio}
\end{center}
\end{figure}

\begin{figure}[!htp]
\begin{center}
\includegraphics[width=.4\textwidth,height=.15\textwidth]{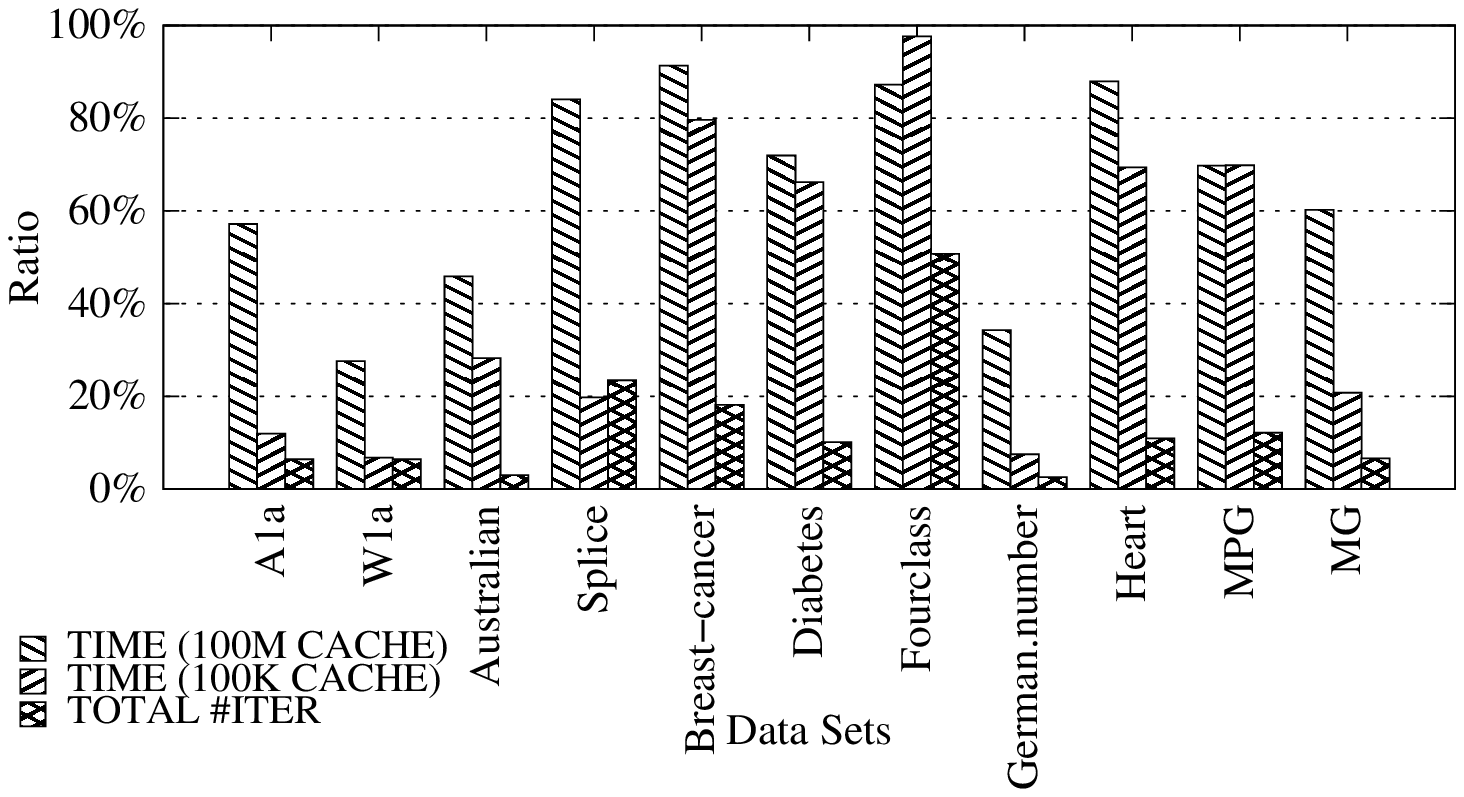} \\
 \includegraphics[width=.4\textwidth,height=.15\textwidth]{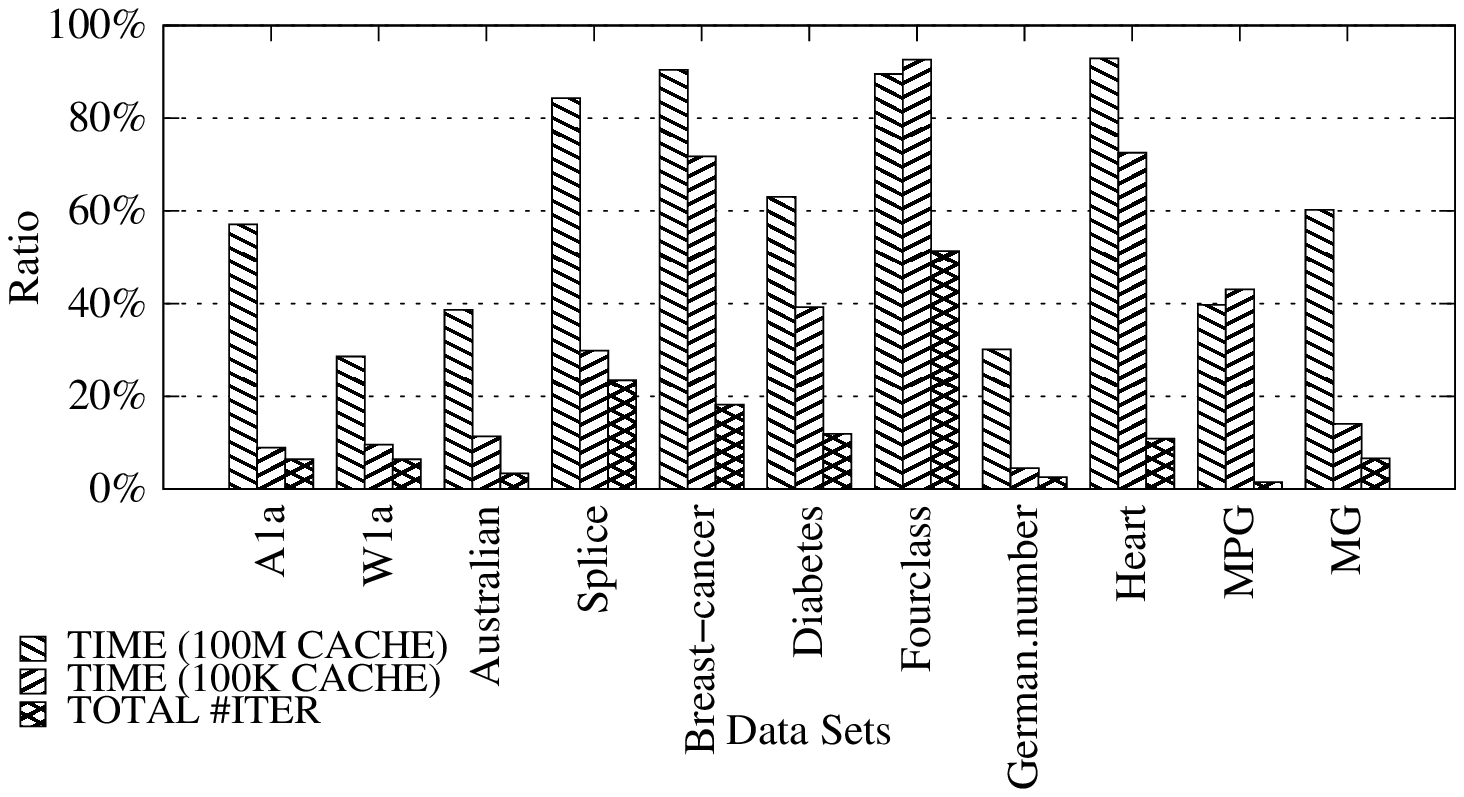}
\caption[Polynomial_ps]{Iteration and time ratios between \textit{WSS-WR} and \textit{WSS 3} using the Polynomial kernel for the "parameter selection" step~(top: with shrinking, bottom: without shrinking).}%
\label{Polynomial ps ratio}
\end{center}
\end{figure}

\begin{figure}[!htp]
\begin{center}
\includegraphics[width=.4\textwidth,height=.15\textwidth]{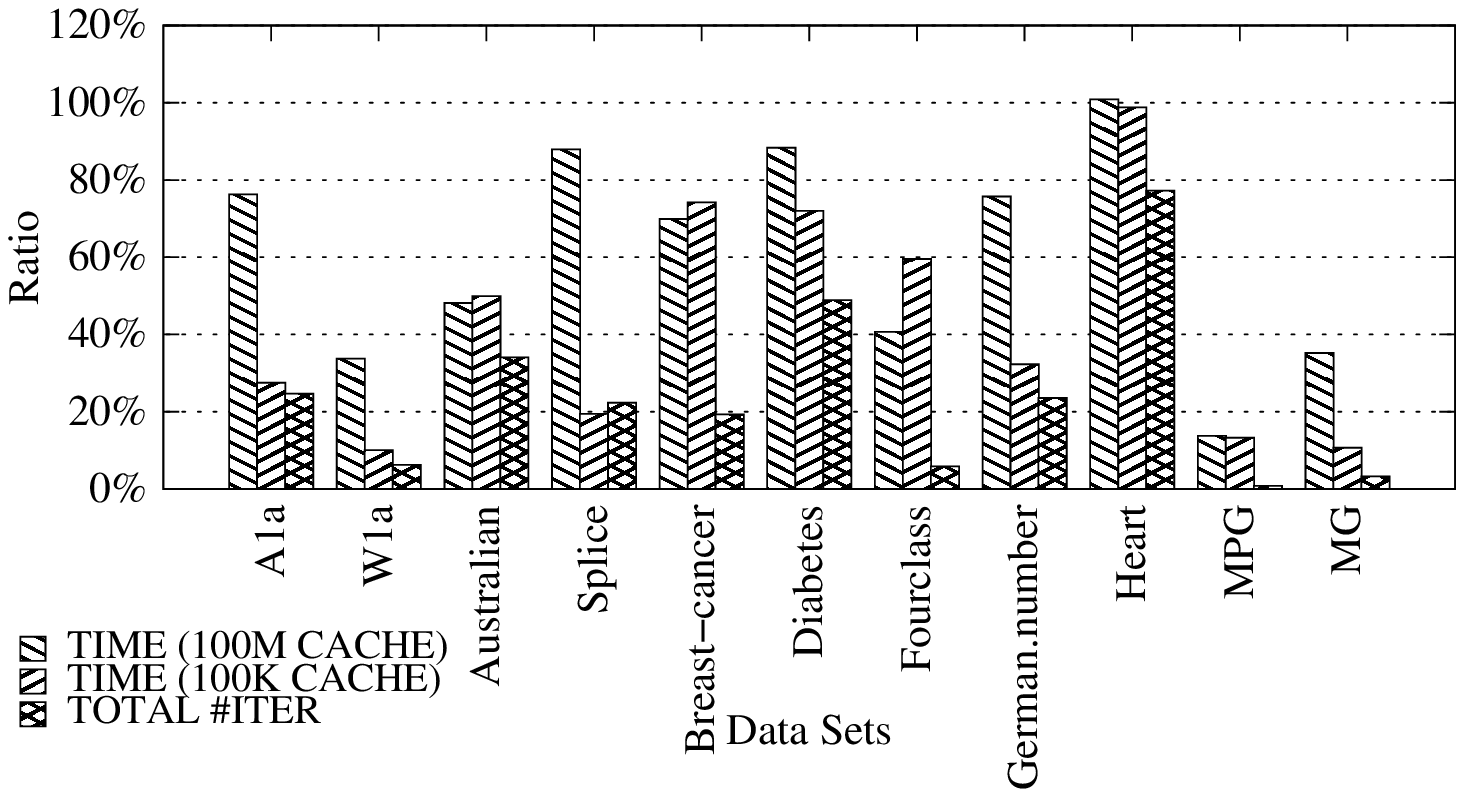} \\
\includegraphics[width=.4\textwidth,height=.15\textwidth]{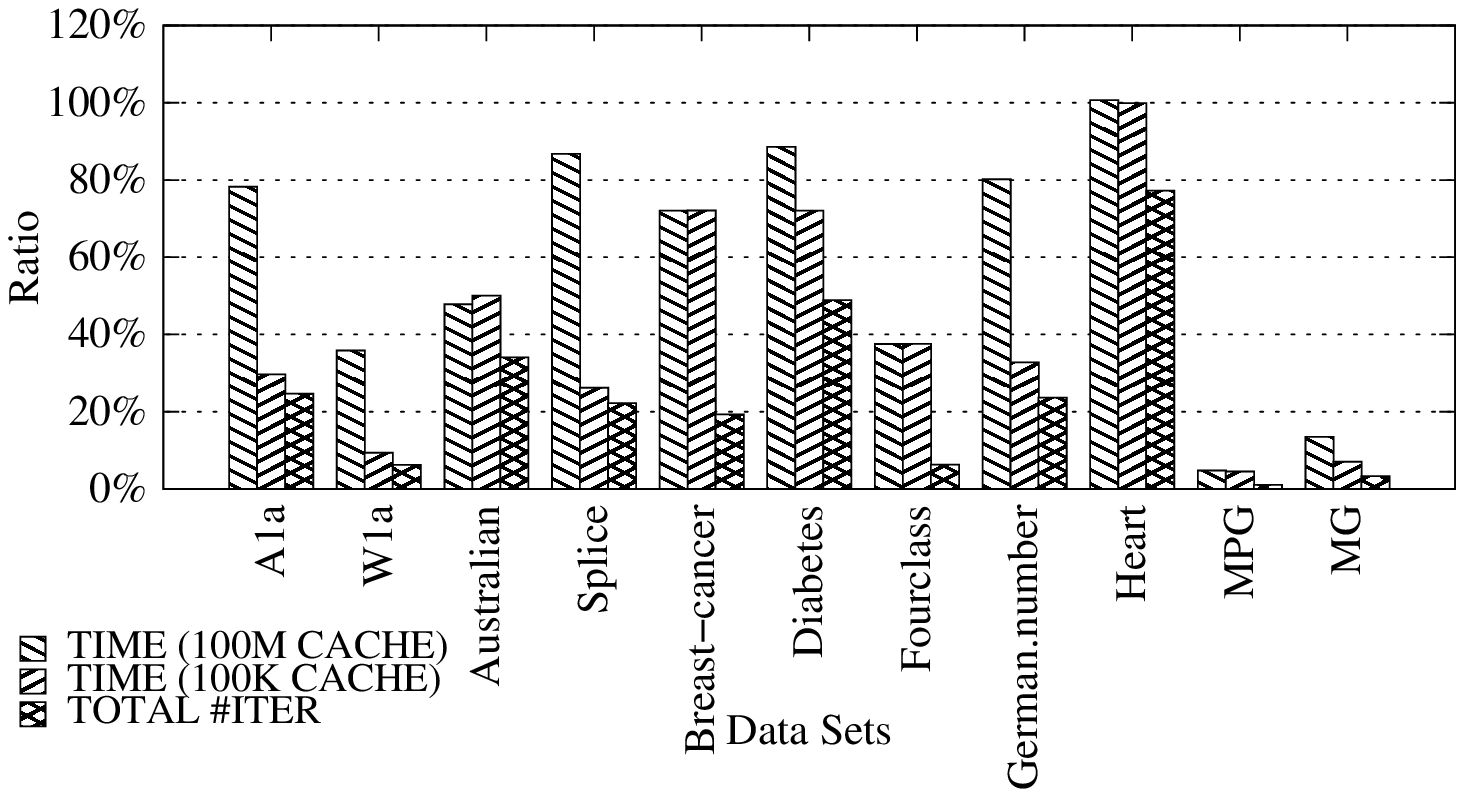}
\caption[Polynomial_ft]{Iteration and time ratios between \textit{WSS-WR} and \textit{WSS 3} using the Polynomial kernel for the "final training" step~(top: with shrinking, bottom: without shrinking).}%
\label{Polynomial ft ratio}
\end{center}
\end{figure}

\begin{figure}[!htp]
\begin{center}
\includegraphics[width=.4\textwidth,height=.15\textwidth]{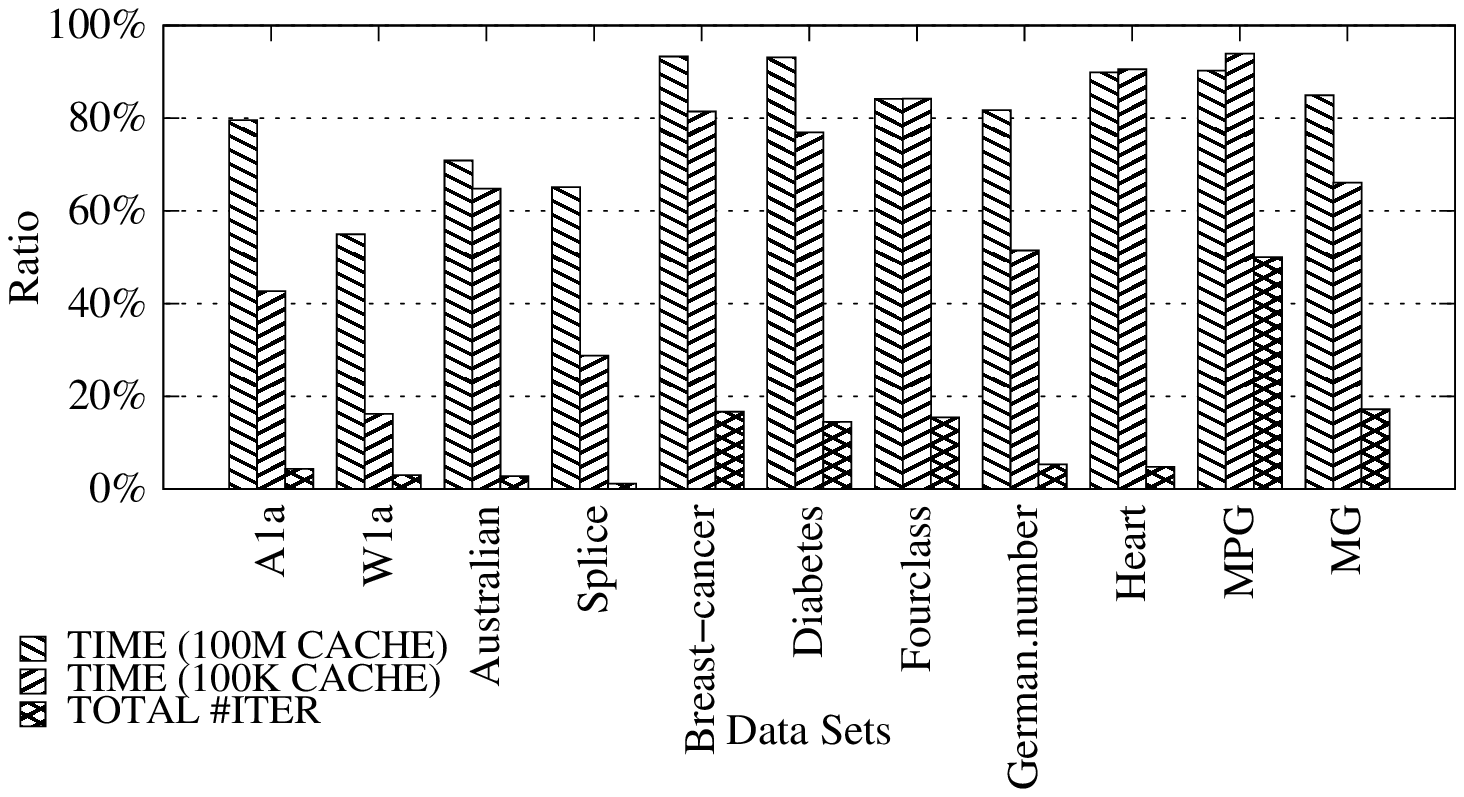} \\
 \includegraphics[width=.4\textwidth,height=.15\textwidth]{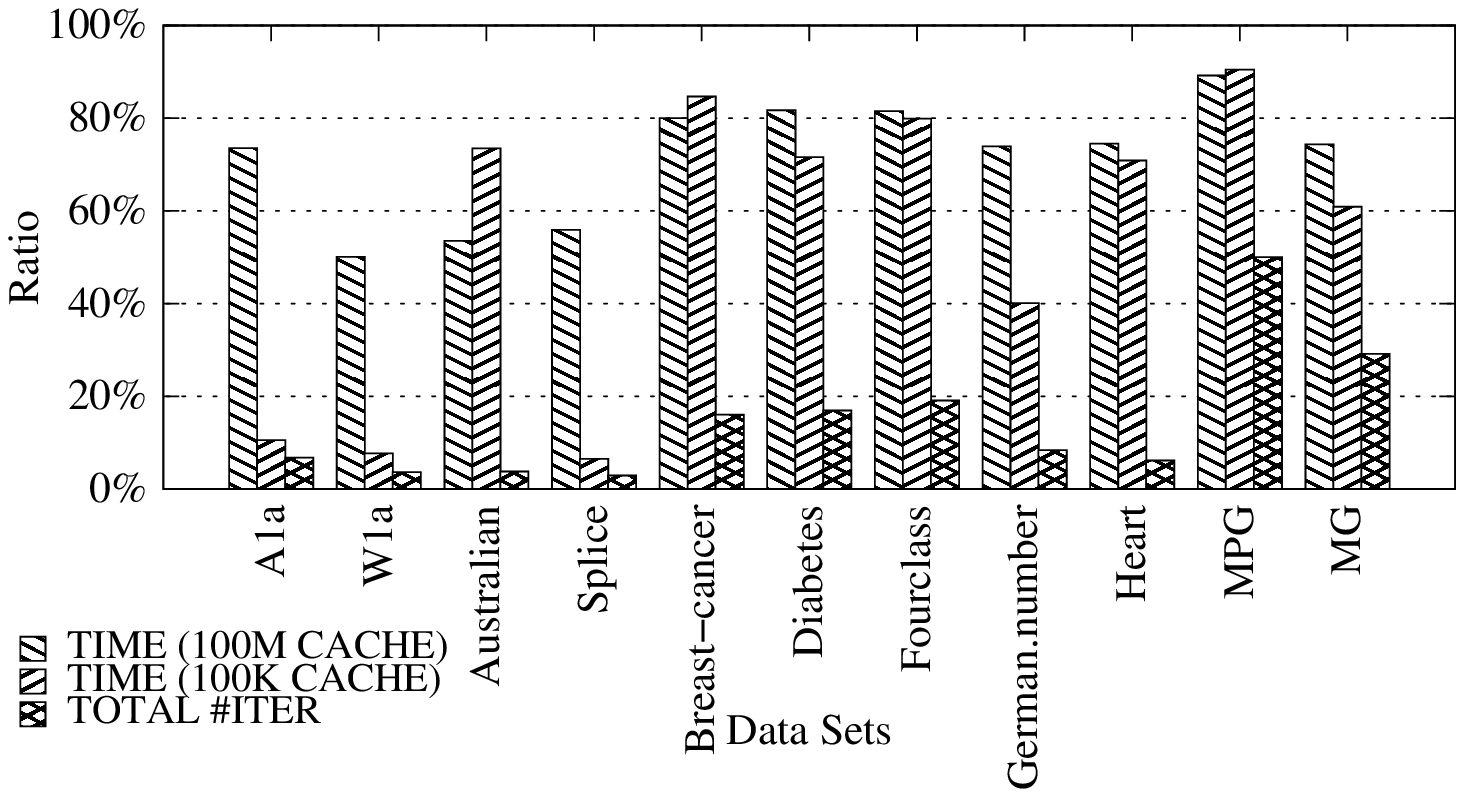}
\caption[Sigmoid_ps]{Iteration and time ratios between \textit{WSS-WR} and \textit{WSS 3} using the Sigmoid kernel for the "parameter selection" step~(top: with shrinking, bottom: without shrinking).}%
\label{Sigmoid ps ratio}
\end{center}
\end{figure}

\begin{figure}[!htp]
\begin{center}
\includegraphics[width=.4\textwidth, height=.15\textwidth]{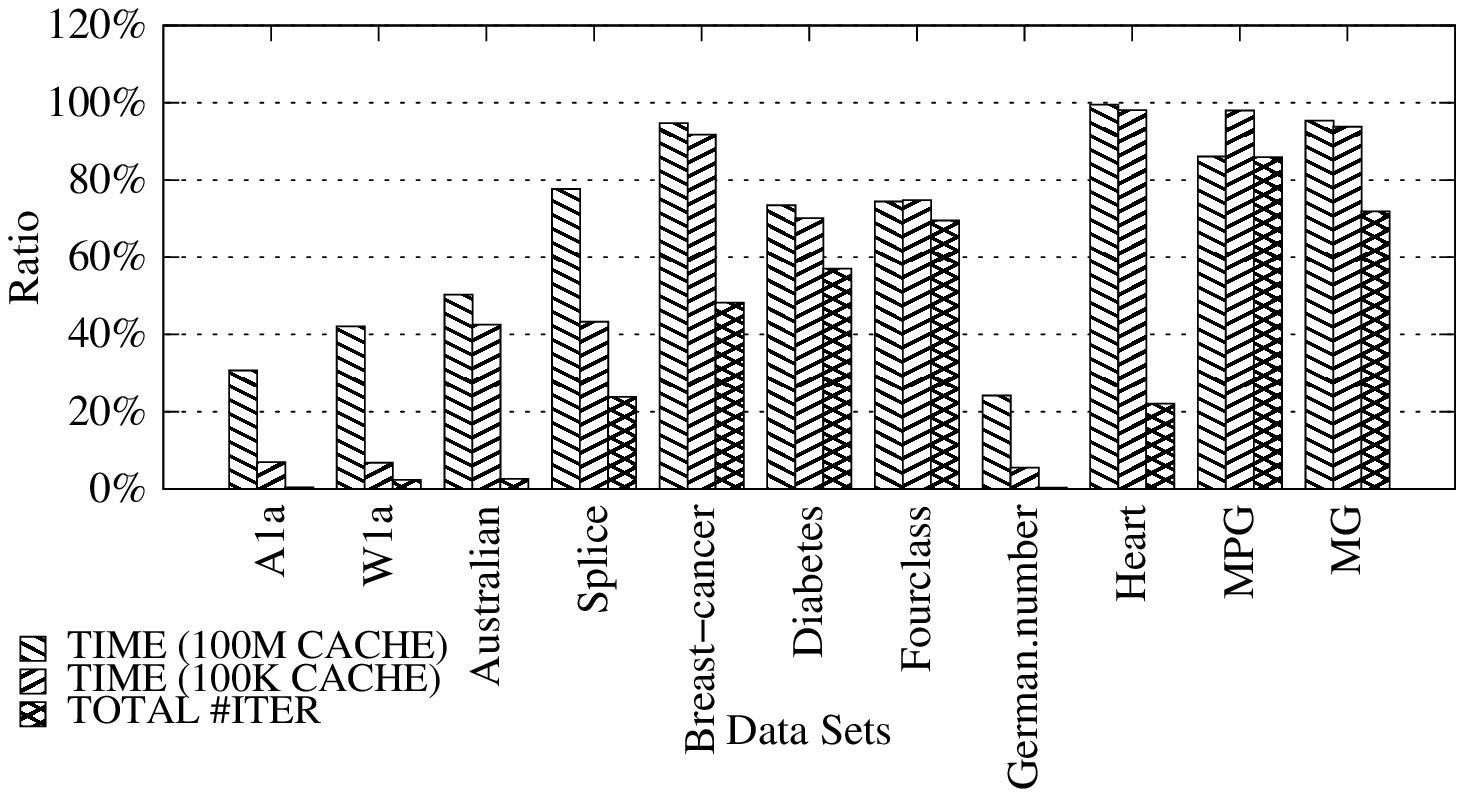} \\
\includegraphics[width=.4\textwidth, height=.15\textwidth]{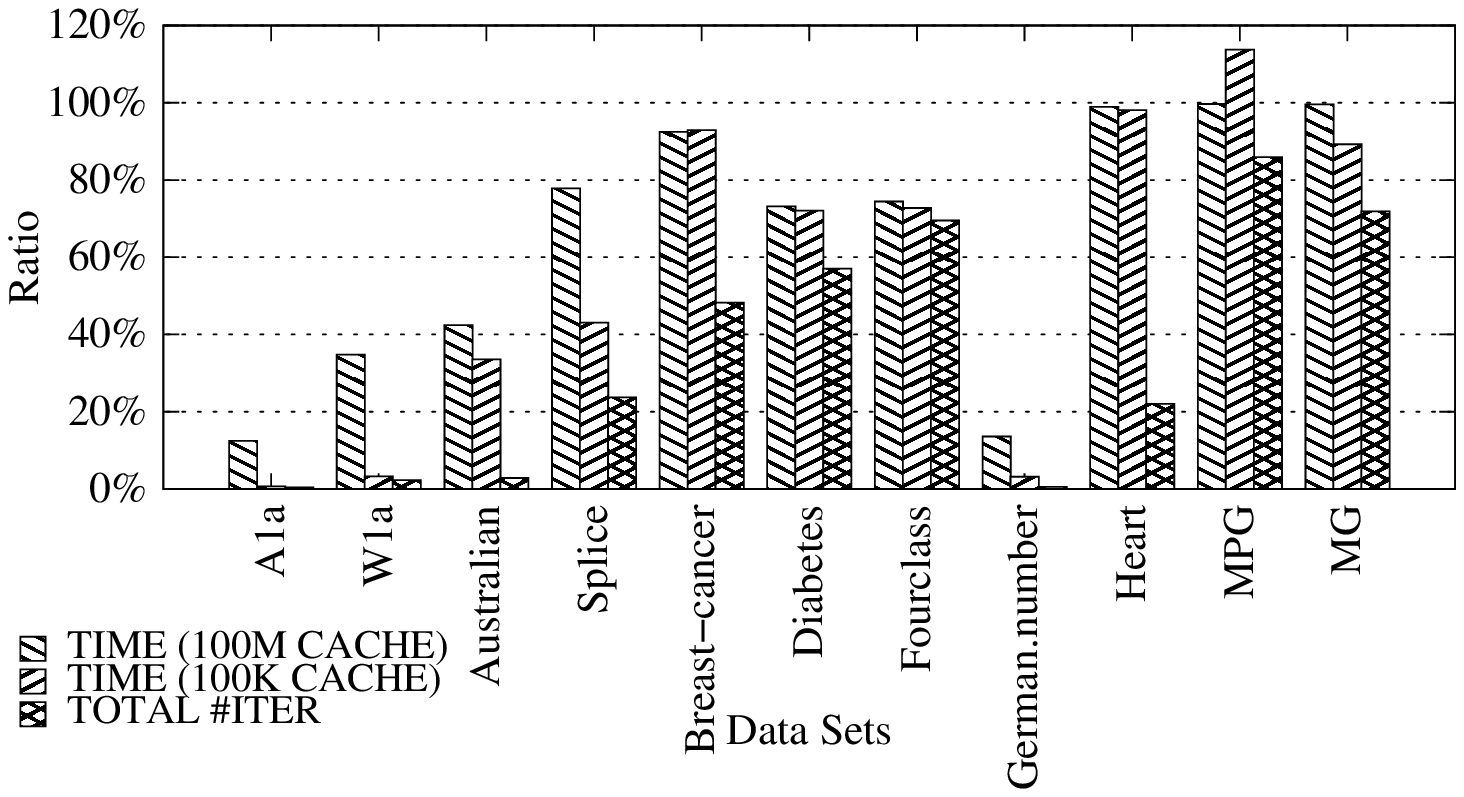}
\caption[Sigmoid_ft]{Iteration and time ratios between \textit{WSS-WR} and \textit{WSS 3} using the Sigmoid kernel for the "final training" step~(top: with shrinking, bottom: without shrinking).}%
\label{Sigmoid ft ratio}
\end{center}
\end{figure}

The number of iterations is independent of the cache size. In the
"parameter selection" step, time~(or iterations) of all parameters is
summed up before calculating the ratio. In general the "final
training" step is very fast, so the timing result may not be accurate.
Hence we repeat this step 4 times to obtain more reliable timing
values. Fig.~\ref{RBF ps ratio}-~\ref{Sigmoid ft ratio} present obtain
ratios, They are in general less than 1. We can conclude that
using \textit{WSS-WR} is in general better than using \textit{WSS 3}.

\subsubsection{Caching and Shrinking techniques in \textit{WSS-WR}}
According to the analysis of the experimental results, in
\textit{WSS-WR} model, the caching and shrinking techniques do not
have much more effects on SMO decomposition method. The data in
Table~\ref{W1a,Comparison of Caching and
  Shrinking},~\ref{German.numer,Comparison of Caching and Shrinking}
certify the above conclusion.

\begin{table}
\centering
\begin{tabular}[!htp]{|c|c|c|c|c|}
\hline \multicolumn{5}{|c|}{ \textit{WSS-WR}} \\
\hline method & RBF & Linear  & Poly. & Sigm. \\
\hline 100M,~shrinking & 56.8936 & 15.6038 & 9.3744 & 15.9825  \\
\hline 100M,~nonshrinking & 56.8936 & 15.6038 & 9.7964 & 16.1321 \\
\hline 100K,~shrinking & 59.9929 & 17.7689 & 9.4823 & 16.7169  \\
\hline 100K,~nonshrinking & 59.8346 & 15.3692 & 8.9024 & 16.8676  \\
\hline \multicolumn{5}{|c|}{\textit{WSS 3}} \\
\hline & RBF & Linear & Poly. & Sigm.  \\
\hline 100M,~shrinking &  400.7368 & 49.5343 & 34.0337 & 29.0977 \\
\hline 100M,~nonshrinking & 505.7596 & 62.5636 & 34.3321 & 32.2544 \\
\hline 100K,~shrinking & 1655.7840 & 232.5998 & 140.9247 & 103.3014 \\
\hline 100K,~nonshrinking & 1602.4966 & 533.9386 & 93.3340 & 220.2574 \\
\hline
\end{tabular}
\caption{ W1a, comparison of caching and shrinking } \label{W1a,Comparison of Caching and Shrinking}
\end{table}
 
\begin{table}
\centering
\begin{tabular}[!htp]{|c|c|c|c|c|}
\hline \multicolumn{5}{|c|}{ \textit{WSS-WR}}  \\
\hline method & RBF & Linear  & Poly. & Sigm.   \\
\hline 100M,~shrinking & 97.8465 & 24.0803 & 15.9311 & 31.7092  \\
\hline 100M,~nonshrinking & 97.8465 & 24.0803 & 15.9311 & 31.7092  \\
\hline 100K,~shrinking &  99.8329 & 26.4382 & 15.0456 & 30.8630 \\
\hline 100K,~nonshrinking & 92.9560 & 25.7199 & 15.0456 & 28.5457 \\
\hline \multicolumn{5}{|c|}{\textit{WSS 3}} \\
\hline method & RBF & Linear & Poly. & Sigm.  \\
\hline 100M,~shrinking &  299.3279 & 168.0513 & 46.5298 & 38.7958 \\
\hline 100M,~nonshrinking &  413.3258 & 764.8448 & 52.9702 & 42.9117 \\
\hline 100K,~shrinking &  1427.4005 & 505.8935 & 201.4057 & 60.0467 \\
\hline 100K,~nonshrinking & 3811.7786 & 1213.8522 & 336.8386 & 71.2662 \\
\hline
\end{tabular}
\caption{ German.numer, comparison of caching and shrinking } \label{German.numer,Comparison of Caching and Shrinking}
\end{table}

\subsubsection{Experiments of Large Classification Datasets}
Next, the experiment with large classification sets is handled by a
similar procedure. As the parameter selection is time consuming, we
adjust the "parameter selection" procedure to a 16-point searching.
The cache size are 300MB and 1MB. The experiments employ RBF and Sigmoid kernel
methods, along with that Sigmoid kernel in general leads the worst ratio between
\textit{WSS-WR} and \textit{WSS 3}. Table~\ref{large sets} gives iteration and time
ratios. Comparing the results among small problems, we safely draw the
conclusion that ratios of time and iteration in large datasets are
less than those of them in small ones, especially, in the situation of
small size of cache.

\begin{table}
\centering
\begin{tabular}[!htp]{|c|c|c|c|c|c|c|}
\hline \multicolumn{3}{|c|}{300MB cache} & \multicolumn{4}{|c|}{RBF kernel}  \\
\hline \multicolumn{3}{|c|}{Datasets } & \multicolumn{2}{|c|}{
  Shrinking} & \multicolumn{2}{|c|}{No-Shrinking}   \\
\hline Problem & \#data & \#feat. & Iter. & Time & Iter. & Time \\
\hline a9a & 32,561 & 123 & 0.0522 & 0.2306 & 0.3439 & 0.8498   \\
\hline w8a & 49,749 & 300 & 0.0370 & 0.0327 & 0.0149 & 0.2106 \\
\hline IJCNN1 & 49,990 & 22 & 0.1187 & 0.5889 & 0.3474 & 0.7422 \\
\hline \multicolumn{3}{|c|}{300MB cache} & \multicolumn{4}{|c|}{Sigmoid kernel}  \\
\hline \multicolumn{3}{|c|}{Datasets} & \multicolumn{2}{|c|}{
  Shrinking} & \multicolumn{2}{|c|}{No-Shrinking}   \\
\hline Problem & \#data & \#feat. & Iter. & Time & Iter. & Time \\
\hline a9a & 32,561 & 123 & 0.0522 & 0.2282 & 0.3439 & 0.8603  \\
\hline w8a & 49,749 & 300 & 0.0380 & 0.0443 & 0.0339 & 0.4554 \\
\hline IJCNN1 & 49,990 & 22 & 0.1309 & 0.5928 & 0.3883 & 0.8147 \\
\hline \multicolumn{3}{|c|}{1MB cache Nonshrinking} & \multicolumn{2}{|c|}{
  RBF} & \multicolumn{2}{|c|}{Sigmoid}   \\
\hline Problem & \#data & \#feat. & Iter. & Time & Iter. & Time \\
\hline a9a & 32,561 & 123 & 0.0522 & 0.0687 & 0.3439 & 0.3799  \\
\hline w8a & 49,749 & 300 & 0.0370 & 0.0365 & 0.0149 & 0.3279 \\
\hline IJCNN1 & 49,990 & 22 & 0.1187 & 0.1673 & 0.3474 & 0.4093 \\
\hline
\end{tabular}
\caption{ Large problems: Iteration and time ratios between \textit{WSS-WR} and
  \textit{WSS 3} for 16-point "parameter selection".} \label{large sets}
\end{table}


\subsection{Convergence Graph of \textit{WSS-WR}}
For the reason that \textit{WSS-WR} employs \textit{Algorithm 2} and second order
information method, which \textit{WSS 3} used, we just compare the
convergence rates between them.
 
We set $C=1,\gamma=0,\epsilon=10^{-3}$
during the whole comparison.

First, the evaluation is made on the following datasets: a1a, w1a,
australian, splice, breast-cancer, diabetes, fourclass, german.numer
and heart.

The illustrations in Fig.~\ref{convergence} are the first step evaluation, we compare the
convergence ratios on nine datasets between \textit{WSS-WR} and
\textit{WSS 3} with RBF kernel. Other charts are omitted here for short.

For further analysis, we choose two datasets: w1a and breast-cancer to
made evaluations by using diversity kernel methods~(Linear, RBF,
Polynomial, Sigmoid). Fig.~\ref{w1a_convergence_kernel} compare the
convergence on W1a  between \textit{WSS-WR} and \textit{WSS 3}. The
illustrations of breast-cancer are omitted here for short.

\begin{figure}[!htp]
\includegraphics[scale=.2]{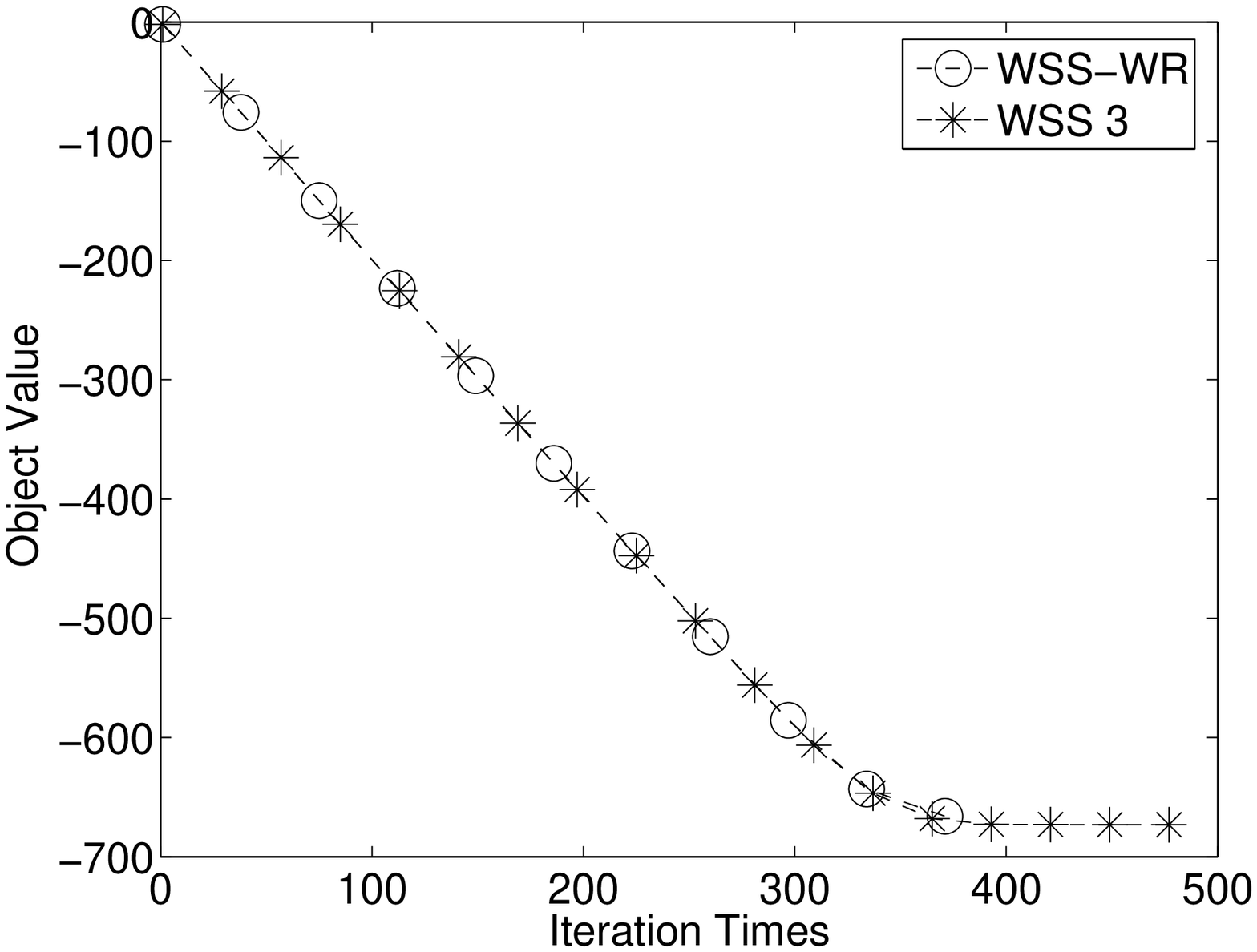}\hfill
\includegraphics[scale=.2]{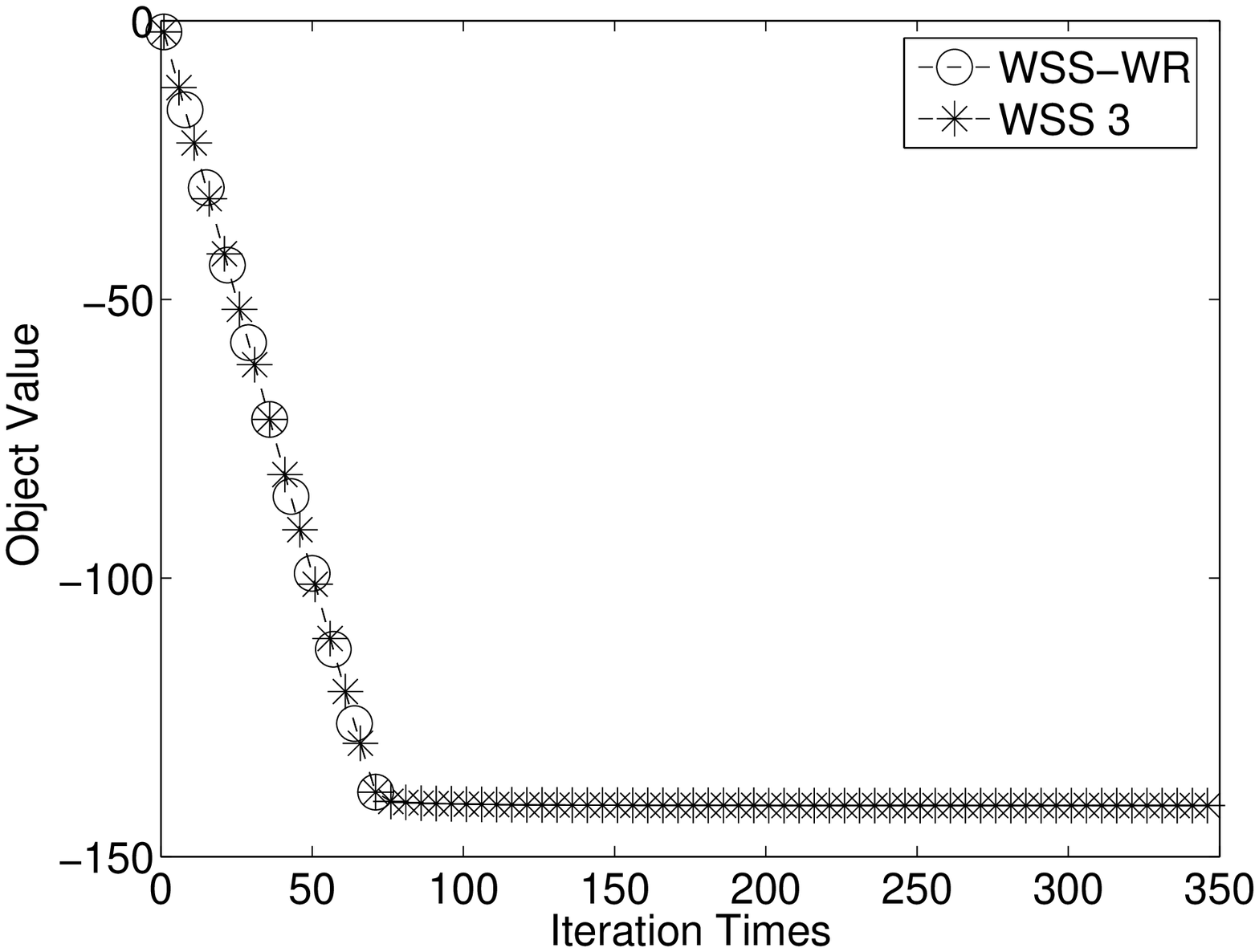}
\\
\includegraphics[scale=.2]{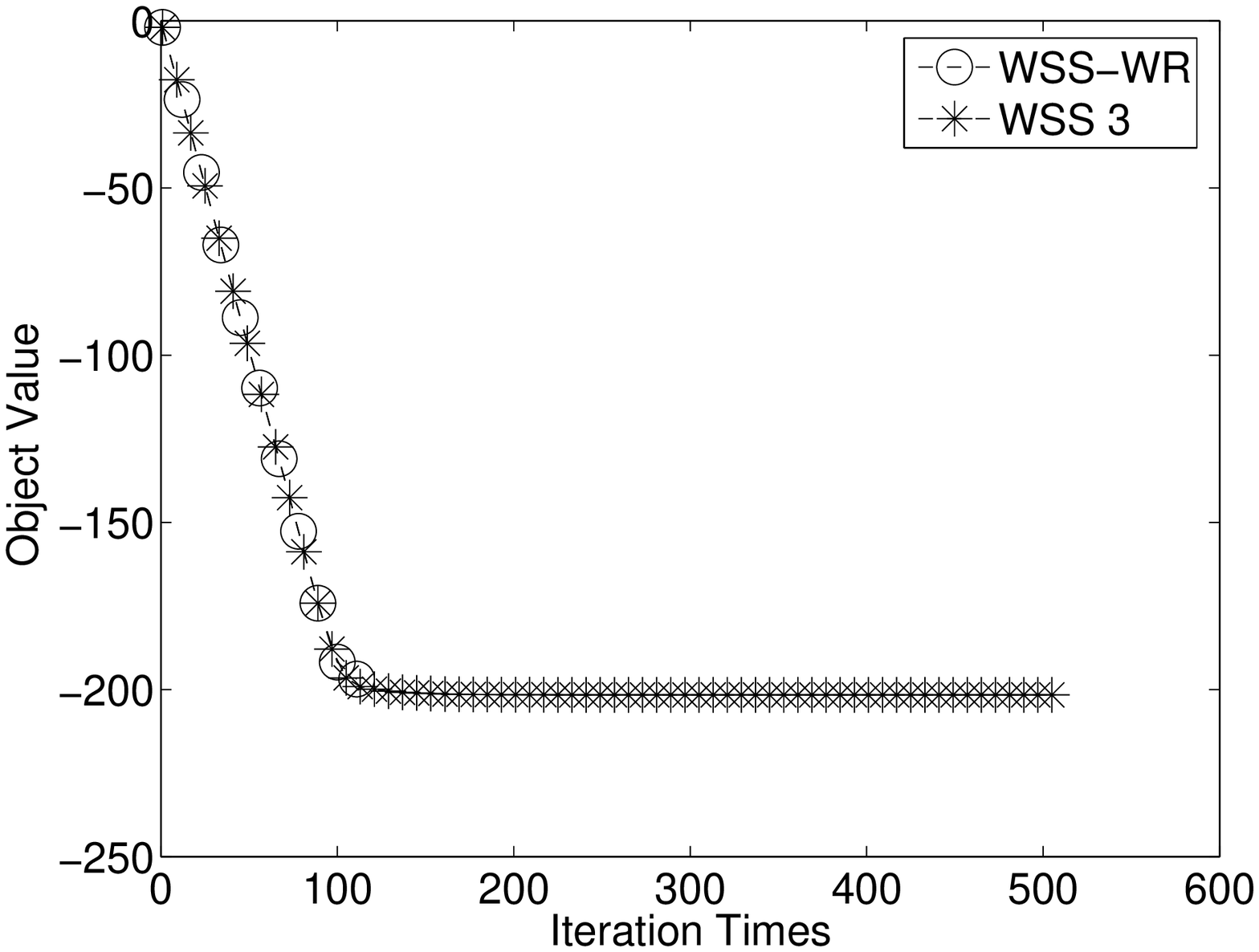}
\hfill
\includegraphics[scale=.2]{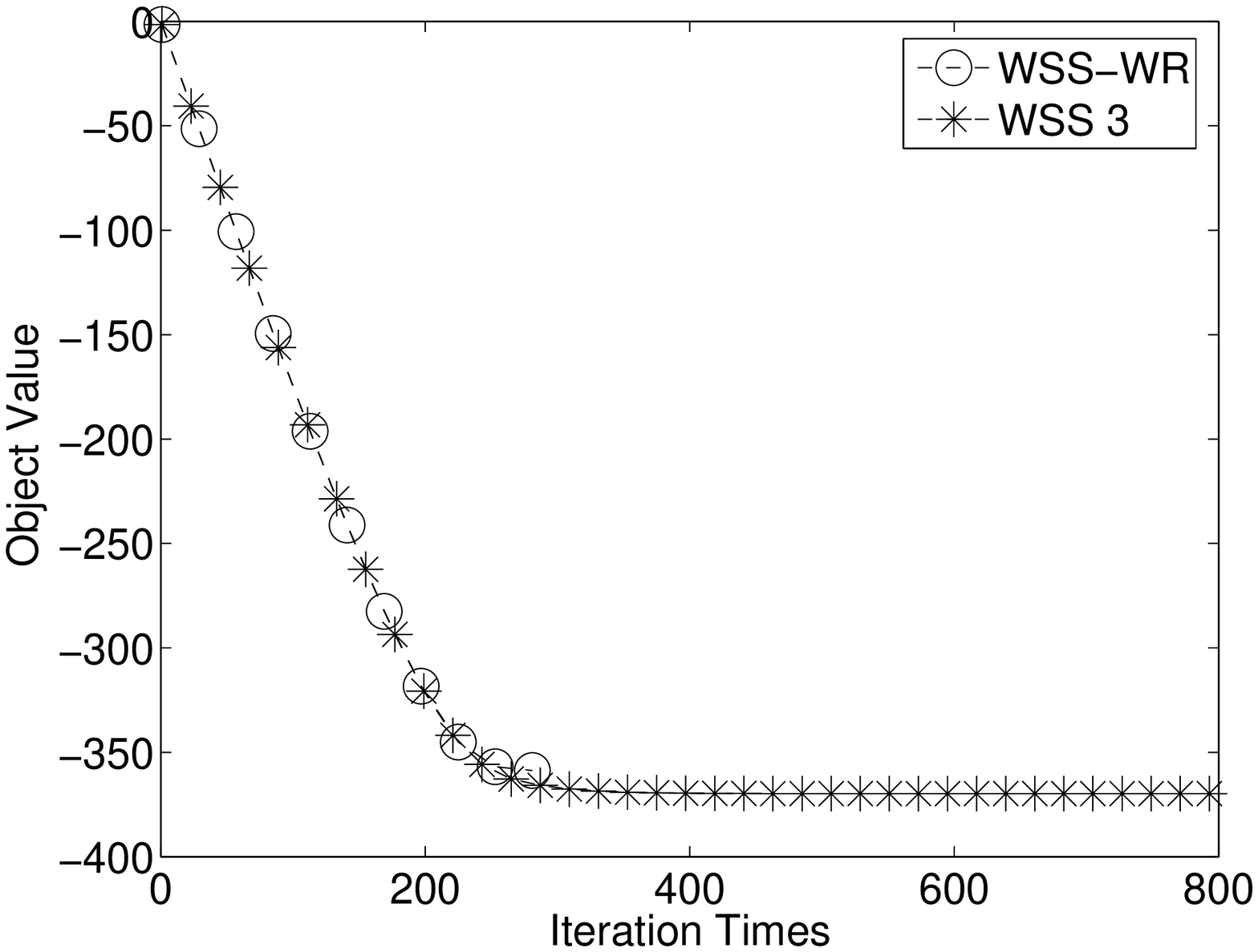} \\
\caption[a1a_convergence]{The comparison of convergence on several
  datasets between \textit{WSS-WR} and \textit{WSS 3} with RBF kernel.
  ~(Datasets in order are: a1a, w1a, australian, splice) }
\label{convergence}
\end{figure}


\begin{figure}[!htp]
\includegraphics[scale=.2]{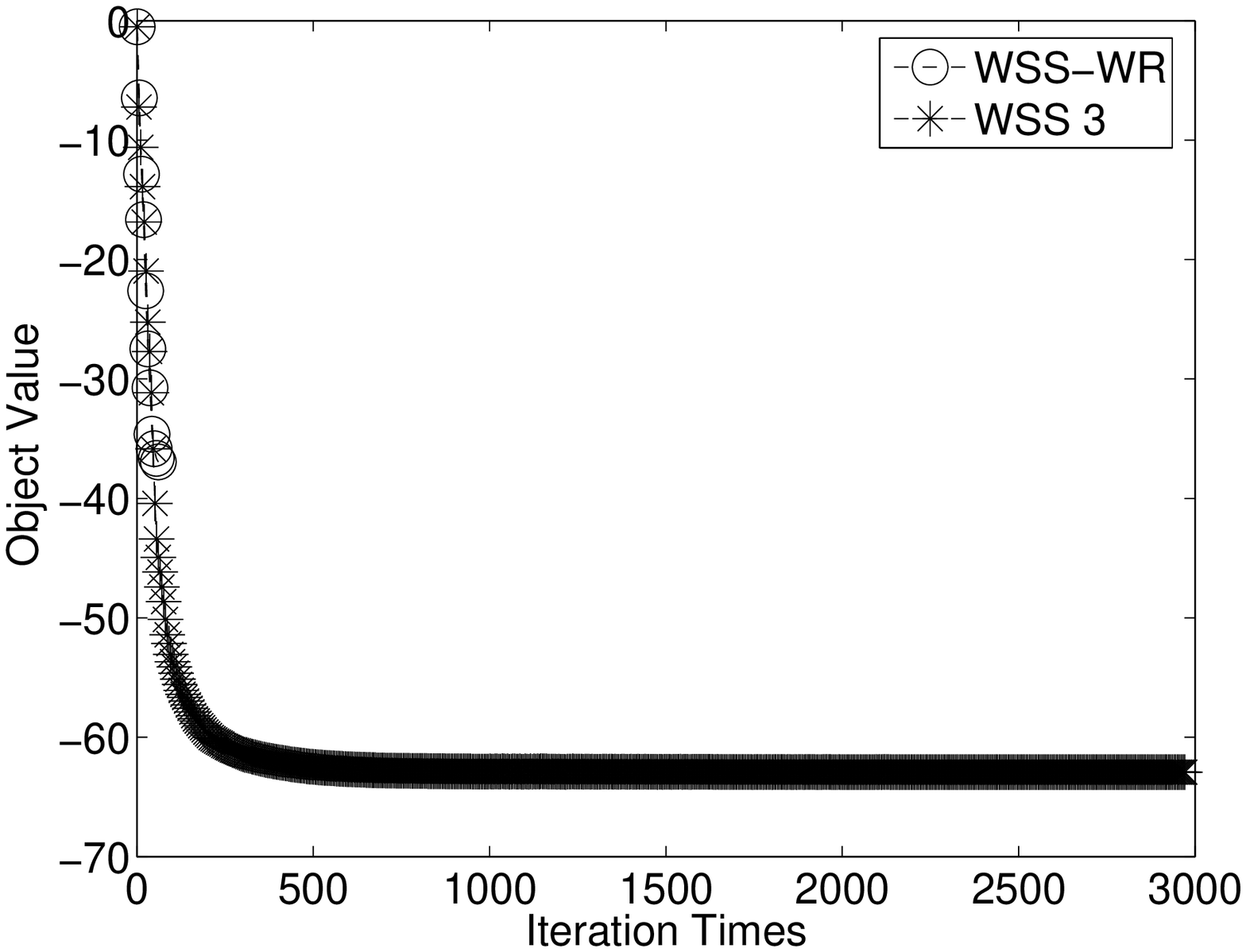} \hfill
\includegraphics[scale=.2]{pic/convergence/w1a_convergence.eps}
\\
\includegraphics[scale=.2]{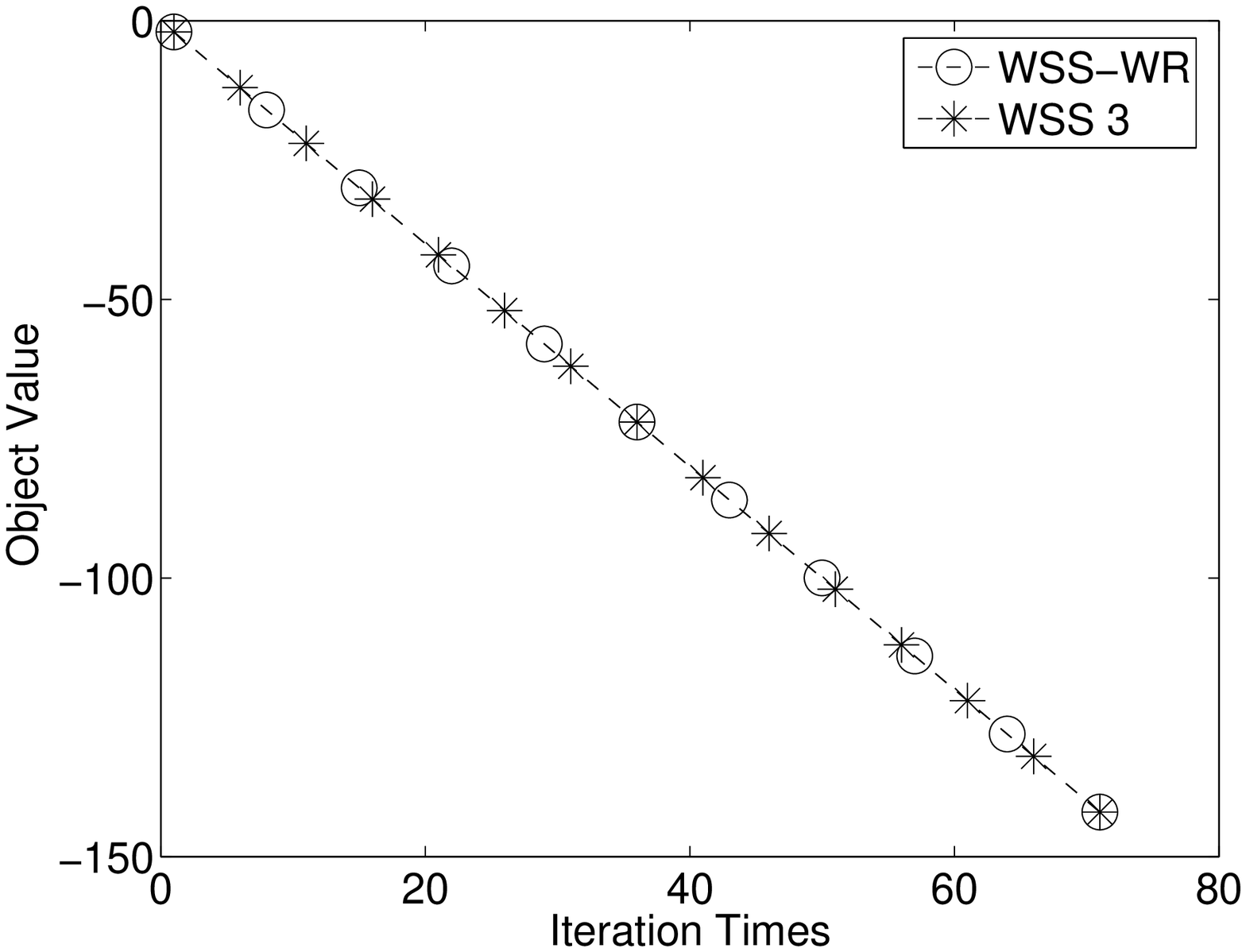}
\hfill
\includegraphics[scale=.2]{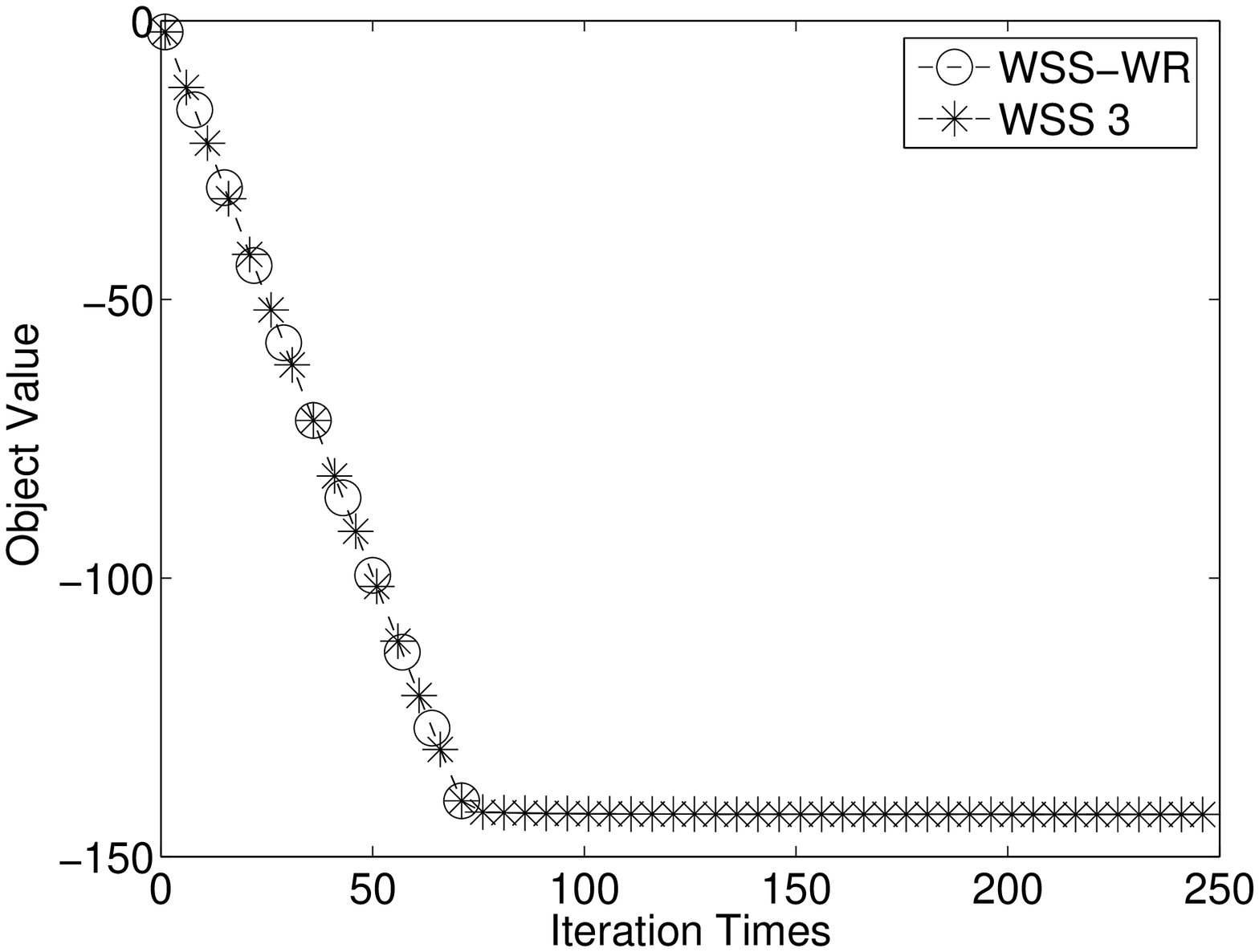}
\caption[w1a_convergence_kernel]{The comparison of convergence on W1a
  between \textit{WSS-WR} and \textit{WSS 3}, using diversity kernels:
  RBF, Linear, Polynomial, Sigmoid}%
\label{w1a_convergence_kernel}
\end{figure}

From the Fig.~\ref{convergence},~\ref{w1a_convergence_kernel}. The
convergence rates of \textit{WSS-WR} and \textit{WSS 3} are exactly
the same at the beginning of the procedure. In addition,
\textit{WSS-WR} will be terminated soon when it reaches the optimum,
while \textit{WSS 3}, on the contrary, will hold the objective value
or make insignificant progress with much more time consumed. Thus, it
is reasonable to conclude that \textit{WSS-WR} is much more efficient.

\subsection{Discussion}
With the above analysis, our main observations and conclusions from
Fig.~\ref{RBF ps ratio}-\ref{Sigmoid ft ratio} and Table~\ref{W1a,Comparison of Caching and Shrinking},~\ref{German.numer,Comparison of Caching and Shrinking},~\ref{large sets} are in the
following:
\begin{enumerate}
\item The improvement of reducing the number of iterations is significant,
  which can be obtained from the illustrations.

\item Using \textit{WSS-WR} dramatically reduces the cost of time. The reduction is more
dramatic for the parameter selection step, where some points have low convergence rates.

\item \textit{WSS-WR} outperforms \textit{WSS 3} in most datasets, both in the
  "parameter selection" step and in "final training" steps. Unlike \textit{WSS 3},
  the training time of \textit{WSS-WR} does not increase when the amount of
  memory for caching drops. This property indicates that \textit{WSS-WR} is useful
  under such situation where the datasets is too large for the kernel
  matrices to be stored, or where there is not enough memory.


\item The shrinking technique of LIBSVM~\cite{Chang.2001} was
  introduced by Pai-Hsuen Chen \textit{et al.},~\cite{Chen.2006} to
  make the decomposition method faster. But in the view of Table~\ref{W1a,Comparison of Caching and Shrinking},
 ~\ref{German.numer,Comparison of Caching and Shrinking}, experiments
  of regression and large classification problems, shrinking
  technique almost does not shorten the training time by using \textit{WSS-WR}.

\item 
Fig.~\ref{RBF ps ratio}-\ref{Sigmoid ft ratio} indicate that the
relationship of the five ratios can be described as follows:
\[
ratio5 < ratio4 < ratio3 < ratio2 < ratio1
\]
Though this may not be valid in all datasets.
\end{enumerate}

\section{Conclusions}
By analyzing the available working set selection methods and some
interesting phenomena, we have proposed a new working set selection
model--Working Set Selection Without Reselection~(\textit{WSS-WR}).
Subsequently, full-scale experiments were given to
demonstrate that \textit{WSS-WR} outperforms \textit{WSS 3} in almost datasets during
the "parameters selection" and "final training" step. Then, we
discussed some features of our new model, by analyzing the results of
experiments. A theoretical study on the convergence of \textit{WSS-WR}
and to continually improve this model are our future work.
\section{Acknowledgements}
This work is under the support of the National High
Technology Research and Development Program of China~(863 Program)
under Grant NO.2006AA01Z232. Thanks to Chih-Chung Chang and Chih-Jen
Lin for their powerful software, LIBSVM.

\bibliographystyle{IEEEtran}
\bibliography{WSS_WR_ICTAI.bib}
\end{document}